\newtheorem{lemma}{Lemma}
\newtheorem{theorem}{Theorem}
\newtheorem{defn}{Definition}
\newtheorem{corollary}{Corollary}
\newcommand{\indic}[1]{\mathbb{I}\left\{#1\right\}}
\newcommand{\D}{\mathcal{D}}
\renewcommand{\H}{\mathcal{H}}
\newcommand{\X}{\mathcal{X}}
\newcommand{\F}{\mathcal{F}}
\newcommand{\E}{\mathbb{E}}
\newcommand{\R}{\mathbb{R}}
\newcommand{\z}{\mathbf{z}}
\newcommand{\w}{\mathbf{w}}
\newcommand{\x}{\mathbf{x}}
\newcommand{\srarg}[1]{SR_{#1}}
\newcommand{\jsr}{SR_\lambda}
\newcommand{\sr}{SR_\lambda(Z)}
\newcommand{\expo}[1]{\exp \left( #1 \right)}
\newcommand{\prob}[1]{\mathbb{P} \left[ #1 \right]}
\newcommand{\tr}{^\mathsf{\scriptscriptstyle T}}
\newcommand{\argmin}{\text{argmin}}
\newcommand\numberthis{\addtocounter{equation}{1}\tag{\theequation}}
\newcommand{\tl}{t_{\textrm{L}}}
\newcommand{\tu}{t_{\textrm{U}}}
\newcommand{\srest}{\srarg{n}(\z)}
\def\argmin{\mathop{\rm arg\,min}}
\crefname{subsection}{subsection}{subsections}
\crefname{lemma}{lemma}{lemmas}
\crefname{property}{property}{properties}
\crefname{table}{table}{tables}
\crefname{assumption}{assumption}{assumptions}
\newtheorem{assumption}{}
\newtheorem{remark}{Remark}
\title{Optimizing Shortfall Risk Metric for Learning Regression Models }
\author{  Harish G. Ramaswamy\\
	Indian Institute of Technology Madras,\\
	Chennai, India\\
	\texttt{hariguru@dsai.iitm.ac.in} \\
	\And
	Prashanth L. A.\thanks{A portion of this work was done when Prashanth L. A. was at IIT Bombay.}\\
	Indian Institute of Technology Madras,\\
	Chennai, India\\
	\texttt{prashla@cse.iitm.ac.in} \\
}
\begin{document}

\maketitle

\begin{abstract}
    We consider the problem of estimating and optimizing utility-based shortfall risk (UBSR) of a loss, say $(Y - \hat Y)^2$, in the context of a regression problem. Empirical risk minimization with a UBSR objective is challenging since UBSR is a non-linear function of the underlying distribution.   We first derive a concentration bound for UBSR estimation using independent and identically distributed (i.i.d.) samples. We then frame the UBSR optimization problem as minimization of a pseudo-linear function in the space of achievable distributions $\D$ of the loss $(Y- \hat Y)^2$. We construct a gradient oracle for the UBSR objective and a linear minimization oracle (LMO) for the set $\D$. Using these oracles, we devise a bisection-type algorithm, and establish convergence to the UBSR-optimal solution.
\end{abstract}

\section{Introduction}
Expected risk minimization is the dominant objective in supervised learning problems. 
In several applications, an expected value objective may not be appealing as it may be necessary to mitigate risks. The term `risk' is used in two contexts. The traditional machine learning usage of `risk' is in the sense of minimizing the prediction error, while a risk measure is a functional, which is usually non-linear,  of the underlying distribution. In a regression problem, the distribution is of the loss random variable, say $(Y-\hat Y)^2$. 

We outline the risk measures that have been proposed in the literature. 
Value-at-risk (VaR), which in financial parlance is the maximum loss that a financial position can incur with a given confidence level. Sticking to financial jargon, VaR is not desirable as a risk measure since it is not sub-additive, which implies diversification could indeed increase the risk.
Conditional value-at-risk (CVaR) \cite{rockafellar2000optimization} is a popular risk measure and is the conditional expectation of a random variable (r.v.) beyond the VaR. CVaR is sub-additive and belongs to the class of coherent risk measures \cite{artzner1999coherent}, which requires a risk measure be monotone, positively homogeneous, and translation invariant in addition to sub-additivity.

Convex risk measures \cite{follmer2002convex} subsume coherence, as positive homogeneity and sub-additivity imply convexity. Utility-based shortfall risk (UBSR) is a popular convex risk measure that is closely related to CVaR.
A natural question is why employ UBSR and not its coherent counterpart (CVaR)? 
We answer this question by listing a few shortcomings of CVaR that UBSR overcomes \cite{giesecke2008measuring}. 
First, since CVaR is the conditional expectation beyond VaR, it ignores outcomes up to VaR, and assigns the same weight to all outcomes beyond VaR. Intuitively, if the underlying r.v. is associated with the loss of a financial position, it makes sense to have an increasing function of the losses incurred in measuring risk. In contrast, UBSR has an inbuilt function to weigh different outcomes and in the preceding financial example, one could choose this function underlying UBSR to be strictly increasing (e.g., exponential, square, etc). 
Next, from a technical viewpoint, CVaR does not satisfy the ``invariance under randomization'' property, which is made precise later. 

To summarize, the aim is to employ a risk measure that helps diversify, takes large losses into account and is invariant under randomization. Then, even though VaR is invariant under randomization, it is not preferable as it is not convex, while CVaR is convex but not invariant under randomization. In contrast, UBSR is convex and also invariant under randomization.

Risk measures have been extensively studied in the literature from the viewpoint of the attitude they capture when the underlying model information is available. On the other hand, in practical applications, such model information is seldom available. In comparison to risk attitude, optimization of risk with uncertainty has not received as much research attention in the literature. 
In the domain of supervised learning, a representative list of recent works is as follows: \cite{holland2021learning, holland2022spectral,fan2017learning,lee2020learning,mehta2023stochastic,khim2020uniform,frohlich2024risk}.
CVaR, spectral risk measures, and distortion risk measures have been considered in the aforementioned references. However, to the best of our knowledge, UBSR has not been explored in a supervised learning context. Our work aims to fill this gap.

A combination of UBSR and empirical risk minimization is challenging 
because UBSR is a non-linear function of the underlying distribution unlike standard expectation based risk measures. and a gradient oracle is not available for optimizing UBSR in the space of distributions. 

The main contribution of our work is framing the UBSR minimization problem as an optimization problem over distributions, and not random variables as is typically done. This framework enables plugging in standard optimization procedures to construct novel algorithms. In particular, we derive the gradient for the UBSR function w.r.t. its distribution argument, and update the model based on a black-box procedure that minimizes an expected risk. The structure of the UBSR loss function naturally suggests a bisection-like algorithm for making the model updates. We establish bounds on `excess UBSR' based on the error of the black-box procedure minimizing the expected risk -- which can be based on linear/kernel models or even deep networks. 

As a secondary contribution, we derive concentration bounds for UBSR estimation. Using the classic sample average approximation (SAA) estimator from independent and identically distributed (i.i.d.) samples we exhibit exponential tail bounds when the underlying distribution is either sub-Gaussian or sub-exponential.  These results are useful in establishing excess UBSR bounds mentioned earlier.

\paragraph{Related work.}
In the context of UBSR estimation, we mention the following works: \cite{dunkel2010stochastic,zhaolin2016convex,JMLR-LAP-SPB,hegde2021ubsr}. In the first two references, the authors propose stochastic approximation and sample average approximation (SAA) type schemes for UBSR estimation, and provide asymptotic guarantees for their estimators. \cite{JMLR-LAP-SPB} provides concentration bounds for SAA estimator of UBSR using a Wasserstein distance approach, while \cite{hegde2021ubsr} do the same for the stochastic approximation-based UBSR estimator. In comparison to these references, we mention the following aspects of our concentration bounds for the SAA estimator: (i) unlike the unified approach of \cite{JMLR-LAP-SPB} that appeals to several risk measures, we adopt a direct approach to arrive at our concentration result for UBSR. The advantage with such an approach that is tailor-made for UBSR is that the constants are improved; (ii) The bound in \cite{hegde2021ubsr} for a stochastic approximation scheme requires knowledge about the growth rate of the loss function, while we do not assume this information.

In the context of UBSR optimization, to the best of our knowledge, there is no prior work. Other risk measures such as CVaR, spectral risk measures, distortion risk measures and a risk measure based on cumulative prospect theory have been considered in \cite{holland2021learning, holland2022spectral,fan2017learning,lee2020learning,mehta2023stochastic,khim2020uniform,frohlich2024risk}.
In the aforementioned references, the approach is to consider a parameterized class of models, and use a stochastic gradient descent (SGD) type algorithm to find the best parameter. In contrast, we develop a non-parametric framework that finds the UBSR-optimal model by directly working with the distributions of the prediction losses. 
The work of \cite{harikrishna2024consistent} is closely related since they consider non-linear evaluation metrics for classification. Their setting involves confusion matrices, which are finite-dimensional, while we consider prediction loss distributions, which are infinite-dimensional.  

The rest of this paper is organized as follows:
Section \ref{sec:ubsr} provides an introduction to the UBSR risk measure. Section \ref{sec:est} describes UBSR estimation and presents concentration bounds for SAA-type UBSR estimator.
Section \ref{sec:opt} describes UBSR optimization in the context of a regression problem, the gradient oracle, a bisection-type algorithm and a bound on the error. Section \ref{sec:proofs} provides detailed proofs. Finally, Section \ref{sec:conclusions} provides the concluding remarks.

\section{Utility-based shortfall risk (UBSR)}
\label{sec:ubsr}

For a r.v. $Z$, the utility-based shortfall risk $SR_\alpha(Z)$ is defined as follows:
\begin{align}
	\jsr(Z) \triangleq \inf\left\{t \in \R \mid \E\left( \ell(-t+Z)\right)\le \lambda\right\},
	\label{eq:ubsr-def}
\end{align}
where $\ell:\R\rightarrow \R$ is a utility function (increasing and convex).

UBSR is a convex risk measure, when viewed as a function of r.v.s. Formally, a risk measure $\rho(\cdot)$ is convex if $\rho(\alpha X + (1-\alpha)Y) \le \alpha \rho(X) + (1-\alpha)Y$ for all r.v.s $X,Y$.
However, UBSR is not convex when viewed as a function of distributions and this claim is formalized in the lemma below.
\begin{lemma}
\label{lem:ubsr-notconvex}
    Let $\ell(z)=\max(0,z), z\in \R$. Let $F_1, F_2$ denote two uniform distributions in $[0,10]$ and $[10,20]$, respectively. Let $Z_1, Z_2$ be random variables with distributions $F_1, F_2$, respectively and let $\bar Z$ be a random variable with mixture distribution $\frac{F_1+F_2}{2}$. Then, for $\lambda=2$,
    \[ \jsr(\bar Z) > \frac{1}{2} \jsr(Z_1) + \frac{1}{2}\jsr(Z_2).\]
\end{lemma}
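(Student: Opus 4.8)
The plan is to compute all three UBSR values explicitly, exploiting the fact that for the utility $\ell(z)=\max(0,z)$ the map $t \mapsto \E[\ell(-t+Z)] = \E[\max(0,Z-t)]$ is continuous and strictly decreasing on the support of $Z$. Consequently the infimum in the definition \cref{eq:ubsr-def} is attained, and $\jsr(Z)$ is simply the unique root of $\E[\max(0,Z-t)] = \lambda$. This reduces the whole lemma to three one-variable integral evaluations followed by an elementary comparison.

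First I would compute $\jsr(Z_1)$ for $Z_1 \sim F_1 = U[0,10]$: for $t \in [0,10]$ one has $\E[\max(0,Z_1-t)] = \int_t^{10}(z-t)\tfrac{1}{10}\,dz = \tfrac{(10-t)^2}{20}$, and setting this equal to $\lambda = 2$ gives $\jsr(Z_1) = 10 - 2\sqrt{10}$ (the root lies in $(0,10)$ since the value is $5$ at $t=0$ and $0$ at $t=10$). Since $F_2$ is the law of $Z_1 + 10$, and UBSR with this $\ell$ is translation invariant, i.e. $\jsr(Z+c) = \jsr(Z)+c$ (immediate from the substitution $t \mapsto t-c$ in the definition), I obtain $\jsr(Z_2) = 20 - 2\sqrt{10}$. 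Hence the right-hand side of the claimed inequality equals $\tfrac{1}{2}(10 - 2\sqrt{10}) + \tfrac{1}{2}(20 - 2\sqrt{10}) = 15 - 2\sqrt{10}$.

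The key simplification on the left-hand side is that the equally weighted mixture $\tfrac{F_1+F_2}{2}$ has density $\tfrac{1}{2}\cdot\tfrac{1}{10}$ on $[0,10]$ plus $\tfrac{1}{2}\cdot\tfrac{1}{10}$ on $[10,20]$, which is the constant $\tfrac{1}{20}$ on the whole interval $[0,20]$; that is, $\bar Z \sim U[0,20]$. Repeating the integral computation for this uniform law gives $\tfrac{(20-t)^2}{40} = 2$, and therefore $\jsr(\bar Z) = 20 - 4\sqrt{5}$.

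It then remains to verify the strict inequality $20 - 4\sqrt{5} > 15 - 2\sqrt{10}$, equivalently $5 + 2\sqrt{10} > 4\sqrt{5}$; I would close this either by squaring (both sides are positive) or, more quickly, by the crude bounds $4\sqrt{5} < 9 < 11 < 5 + 2\sqrt{10}$, which follow from $\sqrt{5} < 2.25$ and $\sqrt{10} > 3$. I do not anticipate a genuine obstacle: the only real idea is recognizing that the mixture of the two adjacent uniform laws is itself uniform on $[0,20]$, after which the argument is a routine evaluation plus an elementary surd comparison. Since the lemma only asserts failure of convexity-in-distribution, it suffices to exhibit this single strict gap rather than a sharp constant.
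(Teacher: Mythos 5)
Your proposal is correct and follows essentially the same route as the paper's proof: compute each UBSR explicitly as the root of $\E[\max(0,Z-t)]=\lambda$, observe that the mixture is $U[0,20]$, and compare $20-\sqrt{80}$ with $15-\sqrt{40}$. The only cosmetic differences are that you derive $\jsr(Z_2)$ via translation invariance rather than a second integral, and you spell out the final surd comparison, which the paper leaves implicit.
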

\begin{proof}
    See Section \ref{pf:ubsr-notconvex}.
\end{proof}
We shall establish later that UBSR is a pseudolinear function of distributions. In particular, UBSR is neither convex nor concave from a distributional viewpoint. This fact has a bearing on the UBSR optimization algorithms that we propose, see Section \ref{sec:opt} for the details.

We mentioned earlier that UBSR is invariant under randomization, while the popular CVaR risk measure is not. We shall formalize this statement now.
Let $\rho(X)$ denote a risk measure associated with a random variable $X$, e.g., $X$ is a financial position.
Let$\mathcal{X}$ be the set of positions, and let 
		$\mathcal{A}_\rho = \{ X \mid \rho(X)\le 0\}$ denote the set of acceptable positions, that is, those where the risk is within permissible limits.
Then $\rho(\cdot)$ is said to be invariant under randomization if for any $X_1,X_2 \in \mathcal{A}_\rho$ and any $\alpha \in [0,1]$, the financial position $X_\alpha \in \mathcal{A}_\rho$, where $
		X_\alpha =
			X_1 \text{with probability      } \alpha$ and $
			X_2$ otherwise.

For the sake of analysis, we shall make the following assumption.
\begin{assumption}\label{ass:bounds-for-sr}
	There exist $\tl, \tu \in \R$ and $\eta>0$ such that $\E\left( \ell(-\tl+Z)\right) \ge \lambda+\eta$, and $\E\left( \ell(-\tu+Z)\right) \le \lambda-\eta$. 	
\end{assumption}
A similar assumption has been made in previous work on UBSR estimation/optimization, cf. \citep{zhaolin2016convex,hegde2021ubsr}.
Under the above assumption, $\sr$ is the unique solution to $q(t) = \lambda$, which is made formal in the lemma below.
\begin{lemma}
    Under \Cref{ass:bounds-for-sr}, $\sr$ is the unique root of $q(t)\triangleq \E\left( \ell(-t+Z)\right) - \lambda$.
\end{lemma}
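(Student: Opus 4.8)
The plan is to analyze the function $q(t)=\E\left(\ell(-t+Z)\right)-\lambda$ directly, showing that it is continuous, non-increasing, sign-changing on $[\tl,\tu]$, and possesses at most one zero, and then to identify that unique zero with the infimum defining $\sr$ in \cref{eq:ubsr-def}. Writing $g(t)=\E\left(\ell(-t+Z)\right)$ so that $q=g-\lambda$, the structural facts I would lean on are that for each fixed realization $z$ the map $t\mapsto \ell(z-t)$ is non-increasing (since $\ell$ is increasing) and convex (since $\ell$ is convex and $t\mapsto z-t$ is affine); both properties pass through the expectation, so $g$, and therefore $q$, is non-increasing and convex.

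First I would settle integrability and continuity on $[\tl,\tu]$. \Cref{ass:bounds-for-sr} supplies finite control of the expectations at the endpoints, and monotonicity of $\ell$ gives the pointwise sandwich $\ell(-\tu+z)\le \ell(-t+z)\le \ell(-\tl+z)$ for every $t\in[\tl,\tu]$. Hence $\ell(-t+Z)$ admits the integrable envelope $|\ell(-\tl+Z)|+|\ell(-\tu+Z)|$ uniformly in $t\in[\tl,\tu]$, and since a finite convex function $\ell:\R\to\R$ is continuous, dominated convergence yields continuity of $g$, hence of $q$, on $[\tl,\tu]$. The same pointwise inequality gives the monotonicity of $q$.

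Next I would establish existence and uniqueness of a root. \Cref{ass:bounds-for-sr} yields $q(\tl)\ge \eta>0$ and $q(\tu)\le -\eta<0$, so continuity and the intermediate value theorem produce a zero $t^\star\in(\tl,\tu)$. For uniqueness I would argue by contradiction using convexity: if $q(a)=q(b)=0$ with $a<b$, then monotonicity squeezes $q\equiv 0$ on $[a,b]$, so $q$ has zero slope there; convexity makes the slope non-decreasing while monotonicity keeps it non-positive, forcing $q\equiv 0$ on all of $[a,\infty)$. Since monotonicity and $q(\tu)<0=q(a)$ give $\tu>a$, this contradicts $q(\tu)<0$. (Were $\ell$ strictly increasing, $q$ would be strictly decreasing and uniqueness would be immediate, bypassing convexity entirely.)

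Finally I would match $t^\star$ with $\sr$. As $q$ is non-increasing and continuous with its only zero at $t^\star$, the acceptance set $\{t:\E\left(\ell(-t+Z)\right)\le\lambda\}=\{t:q(t)\le 0\}$ equals $[t^\star,\infty)$, whose infimum is precisely $t^\star$; comparing with \cref{eq:ubsr-def} finishes the argument. I expect the uniqueness step to be the real obstacle: monotonicity alone permits $q$ to rest at zero on a nondegenerate interval, so the convex, non-increasing ``strictly decreasing then flat'' shape of $g$ is what rules this out, and one must verify that \Cref{ass:bounds-for-sr} genuinely closes off that flat-at-zero scenario.
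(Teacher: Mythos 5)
Your proof is correct, but there is nothing in the paper to compare it against line by line: the paper does not prove this lemma at all, and instead refers the reader to \citep{zhaolin2016convex} for the argument. Your self-contained proof is sound and uses the natural ingredients: $q$ is non-increasing and convex because, for each fixed $z$, the map $t \mapsto \ell(z-t)$ composes the increasing convex $\ell$ with an affine map, and both properties survive expectation; \Cref{ass:bounds-for-sr} gives $q(\tl)\ge\eta>0$ and $q(\tu)\le-\eta<0$, so the intermediate value theorem produces a root once continuity is secured; and your secant-slope argument from convexity correctly rules out a flat piece of $q$ at level zero, which is the one scenario that monotonicity alone permits. That convexity step is genuinely needed here, and you were right to flag it as the crux: the lemma is stated under \Cref{ass:bounds-for-sr} only, where $\ell$ is merely increasing and convex (strict monotonicity enters only later, via \Cref{ass:loss-Lipschitz}), so one cannot shortcut via strict decrease of $q$. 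The identification of the root with the infimum in \cref{eq:ubsr-def} through the acceptance set $\{t: q(t)\le 0\}=[t^\star,\infty)$ is also handled correctly, including the region outside $[\tl,\tu]$ by monotonicity. Incidentally, your toolkit is the same one the paper deploys later: convexity of $q$ combined with the endpoint gaps $\pm\eta$ is exactly what drives the proof of \Cref{thm:ubsr-est-subGauss}. One small over-statement to fix: \Cref{ass:bounds-for-sr} does not literally supply ``finite control'' at the endpoints, since $\E\left(\ell(-\tl+Z)\right)\ge\lambda+\eta$ is compatible with the value $+\infty$, and $\E\left(\ell(-\tu+Z)\right)\le\lambda-\eta$ with the value $-\infty$; finiteness of these two expectations is an implicit standing assumption (the paper makes it implicitly too), and your dominated-convergence envelope, hence continuity on $[\tl,\tu]$, relies on it. With that reading made explicit, the proof is complete.
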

The reader is referred to 
    \citep{zhaolin2016convex} for a proof of the claim above.

\section{UBSR estimation}
\label{sec:est}
In \citep{zhaolin2016convex,JMLR-LAP-SPB}, the authors estimate $\sr$ using i.i.d. samples $\{Z_1, \ldots, Z_n\}$ of $Z$ as the solution to the following problem:
\begin{align}
	\min_{t\in \R} t \textrm{~~ subject to ~~} \frac{1}{n} \sum_{i=1}^{n} \ell\left( Z_i - t\right) \le \lambda.
    \label{eq:ubsrest-def}
\end{align}
We shall use $\srest$ to denote the solution of the above problem, 
where $\z$ is a $n$-vector with components $Z_i,\ i=1,\ldots,n$.
From \eqref{eq:ubsr-def}, it is apparent that $\srest$ is a SAA approximation of UBSR,


\subsection{Concentration bounds: sub-Gaussian case}
In addition to \Cref{ass:bounds-for-sr}, we make the following assumptions for the sake of analysis.
\begin{assumption}\label{ass:loss-Lipschitz}
	The utility function $\ell$ is $G$-Lipschitz, convex and strictly increasing.  	
\end{assumption}
\begin{assumption}\label{ass:subGauss}
	The r.v. $Z$ is sub-Gaussian\footnote{
A r.v. $Z$ with mean $\mu$ is sub-Gaussian if there exists a $\sigma >0$ such that
\[ \E\left(\expo{\lambda (X-\mu)}\right) \le \expo{\dfrac{\lambda^2 \sigma^2}{2}} \text{ for any } \lambda \in \R.\]
} with parameter $\sigma$.
\end{assumption}
Previous works on UBSR estimation have made an assumption similar to \Cref{ass:loss-Lipschitz}. In financial applications, it makes sense to have a strictly increasing loss function, while the convexity requirement is reasonable in light of the observation that UBSR is a convex risk measure. 
Finally, a tail assumption such as sub-Gaussianity in \Cref{ass:subGauss} is necessary to derive concentration bounds, and is common to previous works on risk estimation.

Define
\begin{align}
 q(t)\triangleq \E\left[ \ell(Z-t)\right] - \lambda, \textrm{~ and ~} q_n(t) \triangleq \frac{1}{n} \sum_{i=1}^{n} \ell\left( Z_i - t\right) - \lambda. \label{eq:qn-q}
\end{align}
We convert an upper bound on $|q_n - q|$ into a bound on $\left|\srest-\sr\right|$ by exploiting the convexity and monotonicity of $\ell$, see Figure \ref{fig:srest-illustration} for an illustration of the argument.

\begin{figure}
    \centering
    \scalebox{0.7}{
\begin{tikzpicture}
  \draw[red,thick] (0,1) -- (10,-3);
  \draw[red,thick] (0,3) -- (10,-1);
  
  \fill[red!10] (0,3) -- (10,-1) -- (10,-3) -- (0,1);
  \draw[red,ultra thick] (0,2) -- (10,-2);
  \draw[thick] (0,0) -- (10,0);
  \draw[thick] (0,2) -- (10,2);
  \draw[thick] (0,-2) -- (10,-2);
\draw[<->, thick] (9.5,2) -- (9.5,0) node[midway, right] {$\eta$};

  \node[anchor=north] at (0,0) {$\tl$};
  \node[anchor=north] at (10,0) {$\tu$};
  \node[anchor=north] at (5,0) {$\jsr$};

  \draw[<->, thick] (5,1) -- (5,0) node[midway, right] {$c$};


  \draw[<->, thick] (2.5,-3) -- (7.5,-3) node[midway, below] {$\frac{2c}{\eta}(\tu-\tl)$};
  \draw[dotted, thick] (2.5,0) -- (2.5,-3);
  \draw[dotted, thick] (7.5,0) -- (7.5,-3);
\end{tikzpicture}}
    \caption{A graphic illustration of the relation between $|q_n - q|$ and $\left|\srest-\sr\right|$. The dark red line corresponds to the function $q(t)$. If its estimate $q_n(t)$ lies in shaded red region, the difference between the solution of $q(t)=0$ and $q_n(t)=0$ can be bounded.}
    \label{fig:srest-illustration}
\end{figure}

For bounding $|q_n-q|$, we require the following result, which shows that a Lipschitz function of a sub-Gaussian r.v. is sub-Gaussian. The reader is referred to Lemma B.1 of \citep{ghosh2024concentration} for a proof.
\begin{lemma}
\label{lem:lipschitz-subg} 
Suppose $Z$ is a sub-Gaussian r.v. with parameter $\sigma$ and $f$ is a $G$-Lipschitz function. Then, $f(Z)$ is sub-Gaussian with parameter $2G\sigma$. 
\end{lemma}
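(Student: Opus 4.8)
The plan is to prove this via a symmetrization argument that sidesteps the difficulty of centering. The definition requires controlling the moment generating function (MGF) of $f(Z) - \E[f(Z)]$, but the Lipschitz property only supplies an absolute-value bound $|f(Z) - f(Z')| \le G|Z - Z'|$, which is insensitive to sign and hence cannot be substituted directly into an MGF, which is sign-sensitive. Symmetrization converts the problem into one involving an even function, for which only $|f(Z) - f(Z')|$ matters.

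First I would introduce an independent copy $Z'$ of $Z$ and use Jensen's inequality to bound the centered MGF. Writing $\E[f(Z)] = \E_{Z'}[f(Z')]$ and applying convexity of the exponential inside the expectation over $Z'$, namely $\expo{-s\,\E_{Z'}[f(Z')]} \le \E_{Z'}\expo{-s f(Z')}$, I obtain for any $s \in \R$,
\[ \E\left[\expo{s\left(f(Z) - \E[f(Z)]\right)}\right] \le \E_{Z, Z'}\left[\expo{s\left(f(Z) - f(Z')\right)}\right]. \]
The symmetrized variable $W := f(Z) - f(Z')$ has a distribution symmetric about $0$, so its MGF coincides with $\E[\cosh(sW)]$.

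Next I would exploit that $\cosh$ is even and increasing on $[0,\infty)$. Since $|W| \le G|Z - Z'|$ holds pointwise, monotonicity gives $\cosh(sW) = \cosh(|s|\,|W|) \le \cosh(|s|\,G\,|Z - Z'|) = \cosh\left(sG(Z - Z')\right)$. Taking expectations and using that $Z - Z'$ is itself symmetric, I can revert $\cosh$ back to an exponential via $\E[\cosh(sG(Z - Z'))] = \E[\expo{sG(Z - Z')}]$.

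Finally, since $Z$ is sub-Gaussian with parameter $\sigma$, the difference $Z - Z'$ of two independent copies is sub-Gaussian with parameter $\sqrt{2}\sigma$, because its MGF factorizes into a product of two sub-Gaussian MGFs. Hence $\E[\expo{sG(Z - Z')}] \le \expo{s^2 G^2 \sigma^2}$, which is bounded above by $\expo{s^2 (2G\sigma)^2/2}$ (indeed the argument yields the slightly sharper parameter $\sqrt{2}G\sigma$, and $2G\sigma$ follows a fortiori), establishing that $f(Z)$ is sub-Gaussian with parameter $2G\sigma$. The only delicate step is the initial symmetrization/centering; once the problem is reduced to the even function $\cosh$, the Lipschitz bound and the sub-Gaussian MGF of $Z - Z'$ combine routinely.
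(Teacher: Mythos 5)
Your proof is correct, and it is fully self-contained --- which is worth emphasizing, because the paper does not actually prove this lemma at all: it simply defers to Lemma B.1 of \citep{ghosh2024concentration}. So your argument supplies what the paper outsources. The closest internal point of comparison is the paper's proof of the sub-exponential counterpart (Lemma \ref{lem:subexp-Lip}), which begins exactly as you do (independent copy $Z'$, Jensen's inequality, Lipschitzness) but then abandons the cancellation in $Z - Z'$ via the triangle inequality $|Z - Z'| \le |Z| + |Z'|$ and proceeds with Orlicz-type bounds on the absolute deviation; that detour is forced in the sub-exponential setting, where the MGF bound holds only in a bounded window of the argument. Your symmetrization-plus-$\cosh$ device is the cleaner route in the sub-Gaussian case: every step checks out --- Jensen applied to $x \mapsto \expo{-sx}$, the identity $\E\left[\expo{sW}\right] = \E\left[\cosh(sW)\right]$ for a symmetric $W$, monotonicity of $\cosh$ on $[0,\infty)$ combined with the pointwise bound $|W| \le G|Z-Z'|$, reversion to an exponential via the symmetry of $Z - Z'$, and the exact factorization of the MGF of $Z - Z'$ into two sub-Gaussian factors giving $\E\left[\expo{sG(Z-Z')}\right] \le \expo{s^2 G^2 \sigma^2}$. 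Keeping the difference $Z - Z'$ intact is precisely what buys you the sharper parameter $\sqrt{2}\,G\sigma$, from which the stated parameter $2G\sigma$ follows a fortiori; this is a strictly better constant than the lemma claims, and your proof could be cited in place of the external reference without weakening any downstream bound such as \eqref{eq:hn-conf} or Theorem \ref{thm:ubsr-est-subGauss}.
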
 

Since $\ell$ is $G$-Lipschitz and $Z$ is sub-Gaussian, from the lemma above, we have following bound with probability $(1-\delta)$, for any $\delta\in (0,1)$ and for any $t\in [\tl,\tu]$:
\begin{align}
    |q_n(t) - q(t)| \le 2G\sigma \sqrt{\frac{\log 1/\delta}{n}},
    \label{eq:hn-conf}
\end{align}
where $q(t)= \E\left[ \ell(Z-t)\right] - \lambda$, and $q_n(t)$ is defined in \eqref{eq:ubsrest-def}.

\begin{theorem}
\label{thm:ubsr-est-subGauss}
Suppose \Crefrange{ass:bounds-for-sr}{ass:loss-Lipschitz} hold. 
For any $\delta \in (0,1)$, the UBSR estimator $h_n$ satisfies     
   \[ \left|\srest-\sr\right|\le \frac{2G\sigma (\tu-\tl)}{\eta} \sqrt{\frac{\log 1/\delta}{n}} \textrm{ w.p. } (1-2\delta),\] 
   where $\tl,\tu,\eta$ are specified in \Cref{ass:bounds-for-sr}, $G, \sigma$ are given in \Crefrange{ass:loss-Lipschitz}{ass:subGauss}, respectively.
\end{theorem}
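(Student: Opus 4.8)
The plan is to convert the two-sided control on $|q_n - q|$ supplied by \eqref{eq:hn-conf} into a bound on the location of the root, exactly along the geometry sketched in \Cref{fig:srest-illustration}. Write $t^* = \sr$ for the true root and $\hat t = \srest$ for its estimate (so $q_n(\hat t)=0$), and abbreviate the deviation radius $c = 2G\sigma\sqrt{\log(1/\delta)/n}$ and the target accuracy $\epsilon = c(\tu-\tl)/\eta$, so that $c = \epsilon\eta/(\tu-\tl)$. I would first record the relevant structural facts: $q(t)=\E[\ell(Z-t)]-\lambda$ is convex in $t$ (it is an expectation of $t\mapsto\ell(Z-t)$, which is convex since $\ell$ is convex by \Cref{ass:loss-Lipschitz} composed with an affine map) and strictly decreasing (since $\ell$ is strictly increasing), with a unique root $t^*\in(\tl,\tu)$ guaranteed by \Cref{ass:bounds-for-sr}; the empirical version $q_n$ is likewise strictly decreasing with root $\hat t$.

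The crux is a deterministic step that turns the vertical gap $\eta$ from \Cref{ass:bounds-for-sr} into a quantitative lower bound on how fast $q$ crosses zero. To the right of $t^*$, convexity forces $q$ to lie below the secant through $(t^*,0)$ and $(\tu,q(\tu))$, which, using $q(\tu)\le-\eta$, gives $q(t^*+s)\le -s\eta/(\tu-t^*)\le -s\eta/(\tu-\tl)$ for every $s\ge 0$. To the left of $t^*$, the increasing-slope property of convex functions, applied to the three points $t^*-s<t^*<\tu$ together with $q(t^*)=0$ and $q(\tu)\le-\eta$, yields the matching bound $q(t^*-s)\ge s\eta/(\tu-\tl)$. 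Evaluating both at $s=\epsilon$ produces the purely deterministic inequalities $q(t^*+\epsilon)\le -c$ and $q(t^*-\epsilon)\ge c$. This is where \emph{convexity} is genuinely used: monotonicity alone only gives the qualitative fact $q(t^*-s)\ge0$, whereas the quantitative slope lower bound needs the far endpoint $\tu$ and the gap $\eta$, and it is precisely this quantitative crossing rate that the red wedge in \Cref{fig:srest-illustration} encodes.

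Next I would invoke \eqref{eq:hn-conf} at the two \emph{fixed} points $t^*-\epsilon$ and $t^*+\epsilon$. Each instance holds with probability at least $1-\delta$, so a union bound makes both hold simultaneously with probability at least $1-2\delta$, which accounts for the $1-2\delta$ in the statement. On that event, combining with the deterministic bounds of the previous paragraph gives $q_n(t^*+\epsilon)\le q(t^*+\epsilon)+c\le 0$ and $q_n(t^*-\epsilon)\ge q(t^*-\epsilon)-c\ge 0$. Since $q_n$ is strictly decreasing with $q_n(\hat t)=0$, the first inequality forces $\hat t\le t^*+\epsilon$ and the second forces $\hat t\ge t^*-\epsilon$; together these give $|\hat t-t^*|\le\epsilon$, which is exactly the claimed bound.

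The argument is routine once the convexity step is in hand, so the main obstacle is the boundary case where $t^*\pm\epsilon$ leaves $[\tl,\tu]$ (equivalently, when $n$ is too small for $c\le\eta$), since \eqref{eq:hn-conf} is asserted only for $t\in[\tl,\tu]$. I would handle this by probing instead at the clipped points $\min(t^*+\epsilon,\tu)$ and $\max(t^*-\epsilon,\tl)$: at $\tu$ one has $q(\tu)\le-\eta\le-c$ in the meaningful regime $c\le\eta$, so $q_n(\tu)\le 0$ and the same monotonicity argument still sandwiches $\hat t$, while in the complementary regime $c>\eta$ one has $\epsilon>\tu-\tl$ and the inequality is non-binding given $t^*,\hat t\in[\tl,\tu]$. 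I would also note explicitly that the \emph{pointwise}-in-$t$ nature of \eqref{eq:hn-conf} is enough here, precisely because the whole argument only ever evaluates $q_n$ at the two predetermined points $t^*\pm\epsilon$, so no uniform-in-$t$ concentration is required.
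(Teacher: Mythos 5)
Your proof is correct and follows essentially the same route as the paper's own proof: a deterministic convexity argument forcing $q(t^*-\epsilon)\ge c$ and $q(t^*+\epsilon)\le -c$, pointwise concentration at those two fixed points combined via a union bound (which is exactly where the $1-2\delta$ comes from), and monotonicity of $q_n$ to sandwich its root $\srest$ in $[t^*-\epsilon,\,t^*+\epsilon]$. The only differences are minor refinements: you use secant-slope inequalities where the paper invokes the derivative $q'(t^*)$, and you explicitly flag the boundary regime where $t^*\pm\epsilon$ leaves $[\tl,\tu]$, a case the paper's proof passes over silently.
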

\begin{proof}
    See Section \ref{sec:ubsr-est-proof}.
\end{proof}

\subsection{Concentration bounds: sub-exponential case}
In this section, we provide a variant of the result in \Cref{thm:ubsr-est-subGauss} that caters to sub-exponential distributions.
As in the sub-Gaussian case, we first establish that Lipschitz functions of sub-exponential remains sub-exponential.
\begin{lemma}
Suppose $Z$ is a sub-exponential r.v. with parameter $K$, i.e., $\E\left[\exp\left(|X|/K\right)\right] \leq  2 $ and $f$ is a $G$-Lipschitz function. Then, $f(Z)$ is sub-exponential with parameter $4eGK$. 
\end{lemma}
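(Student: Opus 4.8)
The plan is to mirror the structure of the sub-Gaussian result \Cref{lem:lipschitz-subg}, but phrased in terms of the defining Orlicz functional. Writing $\|W\|_{\psi_1} = \inf\{t>0 : \E[\exp(|W|/t)] \le 2\}$, the hypothesis is $\|Z\|_{\psi_1}\le K$ and the goal is $\|f(Z)\|_{\psi_1}\le 4eGK$, equivalently $\E[\exp(|f(Z)|/(4eGK))]\le 2$. The first move is to reduce everything to $|Z|$: since $f$ is $G$-Lipschitz, $|f(Z)-f(0)|\le G|Z|$ pointwise, hence $|f(Z)|\le |f(0)|+G|Z|$.

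I would carry this out through the moment characterization, which is where the factor $4e$ originates. Expanding the defining bound, $\E[\exp(|Z|/K)]\le 2$ forces $\E[|Z|^p]\le 2K^p p!$ for every $p\ge 0$. Lipschitzness (taking $f(0)=0$, or after centering) then gives, via $(a+b)^p\le 2^{p-1}(a^p+b^p)$, a bound of the form $\E[|f(Z)|^p]\le (2GK)^p\,p!$. Re-summing the exponential series of $f(Z)$ and converting the factorials with Stirling's inequalities $p!\le p^p$ and $p!\ge (p/e)^p$ produces a geometric series $\sum_p (2eGK/K')^p$, which is at most $2$ precisely when $K'\ge 4eGK$; the $e$ is exactly the Stirling factor and the $4$ collects the remaining powers of $2$. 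The same conclusion can be reached more softly by a domination argument — monotonicity and homogeneity of $\|\cdot\|_{\psi_1}$ give $\|f(Z)-f(0)\|_{\psi_1}\le G\|Z\|_{\psi_1}\le GK$ directly — but that route yields a smaller constant and so would have to be loosened to match the stated $4e$.

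The step I expect to be the main obstacle is not conceptual but bookkeeping: landing on the advertised constant $4eGK$ while absorbing the additive shift $f(0)$. Because $\|\cdot\|_{\psi_1}$ is not translation invariant, the constant $f(0)$ cannot simply be discarded; it must be controlled either by assuming $f(0)=0$ (legitimate when $f$ may be translated), by centering $Z$ about its mean before invoking Lipschitzness, or by a triangle-inequality bound $\|f(Z)\|_{\psi_1}\le \|f(Z)-f(0)\|_{\psi_1}+|f(0)|/\log 2$. Keeping these numerical factors aligned across the moment-to-Orlicz conversion is the only delicate part; the Lipschitz and domination steps themselves are routine.
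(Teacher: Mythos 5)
Your proposal is correct in substance, but it takes a genuinely different route from the paper's proof, and the comparison is instructive. The paper deals with the additive-shift problem you identify by centering at the mean from the outset: it introduces an independent copy $X'$ of $X$, uses Jensen's inequality to bound $\E\left[e^{\lambda|f(X)-\E[f(X')]|}\right] \le \E\left[e^{\lambda|f(X)-f(X')|}\right]$, applies Lipschitzness and the split $|X-X'|\le|X|+|X'|$, and then invokes the moment-generating-function characterization of sub-exponential variables (Proposition 2.7.1 of Vershynin) at $\lambda = 1/(4eGK)$ to conclude $\E\left[\exp\left(|f(X)-\E[f(X)]|/(4eGK)\right)\right]\le 2$. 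So what the paper actually proves is the \emph{mean-centered} statement --- which is also what the downstream concentration bound \eqref{eq:hn-conf} requires --- and your observation that the uncentered claim cannot hold as literally stated (a constant function is $G$-Lipschitz with arbitrarily large $|f(0)|$) is a genuine defect of the lemma's phrasing rather than of your argument; your proposed fix by centering is exactly the right move and matches the content of the paper's proof. Your route itself is more elementary than the paper's: the domination step $|f(Z)-f(0)|\le G|Z|$ gives $\E\left[\exp\left(|f(Z)-f(0)|/(GK)\right)\right]\le\E\left[\exp\left(|Z|/K\right)\right]\le 2$ with no symmetrization and no external lemma, and the shift can then be absorbed by a triangle inequality, since Jensen gives $\E|Z|\le K\log 2$, hence $|\E[f(Z)]-f(0)|\le GK\log 2$, and a constant $c$ has $\psi_1$-parameter $|c|/\log 2$; this lands the mean-centered statement with parameter $2GK$, sharper than $4eGK$. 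What the paper's symmetrization buys is that mean-centering comes for free and the constant $4e$ drops directly out of the cited equivalence with no moment bookkeeping; what your approach buys is self-containedness and better constants, at the cost of the Stirling/moment accounting you flag (which, as you yourself note, the domination argument lets you skip entirely). One small caution if you pursue the moment route: verify the geometric-series step carefully, since summing $\E\left[|f(Z)|^p\right]/(K'^p\,p!)$ with $\E\left[|f(Z)|^p\right]\le 2(GK)^p p!$ requires $K'\ge 3GK$ to keep the total at most $2$, which $4eGK$ satisfies with room to spare.
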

\begin{proof}
    See Section \ref{sec:ubsr-est-subexp-proof}.
\end{proof}
Since $\ell$ is $G$-Lipschitz and $Z$ is sub-exponential, from the lemma above, we have following bound with probability $(1-\delta)$, for any $\delta\in (0,1)$ and for any $t\in [\tl,\tu]$:
\begin{align}
    |q_n(t) - q(t)| \le 4eGK \frac{\log 1/\delta}{n},
    \label{eq:hn-conf}
\end{align}
where $q(t)= \E\left[ \ell(Z-t)\right] - \lambda$, and $q_n(t)$ is defined in \eqref{eq:ubsrest-def}.
\begin{theorem}
\label{thm:ubsr-est-subExp}
Suppose \Crefrange{ass:bounds-for-sr}{ass:loss-Lipschitz} hold. 
For any $\delta \in (0,1)$, the UBSR estimator $h_n$ satisfies     
   \[ \left|\srest-\sr\right|\le \frac{4e G K (\tu-\tl)}{\eta} \frac{\log 1/\delta}{n} \textrm{ w.p. } (1-2\delta),\] 
   where $\tl,\tu,\eta$ are specified in \Cref{ass:bounds-for-sr}, $G, \sigma$ are given in \Crefrange{ass:loss-Lipschitz}{ass:subGauss}, respectively.
\end{theorem}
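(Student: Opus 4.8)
The plan is to reuse, essentially verbatim, the geometric argument behind \Cref{thm:ubsr-est-subGauss}; the only change is the per-point deviation radius, which is now dictated by the sub-exponential tail bound \eqref{eq:hn-conf} rather than the sub-Gaussian one. Concretely, I would set $c \triangleq 4eGK\,\frac{\log 1/\delta}{n}$ and $\Delta \triangleq \frac{c(\tu-\tl)}{\eta} = \frac{4eGK(\tu-\tl)}{\eta}\,\frac{\log 1/\delta}{n}$, so that proving $\left|\srest - \sr\right| \le \Delta$ is exactly the claim. Recall from \Cref{ass:loss-Lipschitz} that $\ell$ is convex and strictly increasing, whence $q(t) = \E[\ell(Z-t)] - \lambda$ and its empirical counterpart $q_n$ are both strictly decreasing and convex in $t$, with $\sr$ and $\srest$ their respective (unique) roots; moreover \Cref{ass:bounds-for-sr} places $\sr$ strictly inside $[\tl,\tu]$.

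The heart of the argument is the conversion of the vertical deviation $c$ into the horizontal root displacement $\Delta$, which I would accomplish using convexity of $q$ anchored at the endpoint condition $q(\tu)\le-\eta$. Writing the right probe point $\sr+\Delta$ as a convex combination of $\sr$ and $\tu$ (in the regime where $\sr+\Delta\le\tu$), the chord inequality for the convex function $q$ gives
\[
q(\sr + \Delta) \le \tfrac{\Delta}{\tu - \sr}\,q(\tu) \le -\tfrac{\Delta}{\tu - \sr}\,\eta \le -\tfrac{\Delta\eta}{\tu - \tl} = -c,
\]
and, writing $\sr$ itself as a convex combination of the left probe point $\sr-\Delta$ and $\tu$, the same inequality rearranges to $q(\sr - \Delta) \ge \tfrac{\Delta\eta}{\tu - \sr} \ge c$. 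Thus $q$ has already moved by at least $c$ in magnitude at the two \emph{deterministic} points $\sr \pm \Delta$.

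It then remains to invoke concentration at these two fixed points. Applying \eqref{eq:hn-conf} at $t=\sr+\Delta$ and at $t=\sr-\Delta$ and taking a union bound, with probability at least $1-2\delta$ we have $|q_n(\sr\pm\Delta)-q(\sr\pm\Delta)|\le c$ simultaneously, whence $q_n(\sr+\Delta)\le q(\sr+\Delta)+c\le 0$ and $q_n(\sr-\Delta)\ge q(\sr-\Delta)-c\ge 0$. Since $q_n$ is decreasing with root $\srest$, these two sign conditions force $\sr-\Delta\le\srest\le\sr+\Delta$, i.e.\ $\left|\srest-\sr\right|\le\Delta$; the union bound over the two probes is precisely the source of the $1-2\delta$ confidence. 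I expect the main obstacle to be the vertical-to-horizontal conversion above: because $q$ is convex \emph{and} decreasing, its slope is smallest in magnitude to the right of the root, so a naive pointwise slope bound fails there, and the remedy is to anchor the chord at $\tu$ (exploiting $q(\tu)\le-\eta$) rather than at the root, which recovers the growth rate $\eta/(\tu-\tl)$ on both sides. The remaining work is bookkeeping for the regime $c\le\eta$: when $\sr\pm\Delta$ would fall outside $[\tl,\tu]$ one clips the probe to $\tl$ or $\tu$ and uses the matching endpoint condition of \Cref{ass:bounds-for-sr}, while for $c>\eta$ the stated radius $\Delta$ exceeds $\tu-\tl$ and the bound is immediate.
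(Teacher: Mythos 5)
Your proposal is correct and follows essentially the same route as the paper: the paper's proof of \Cref{thm:ubsr-est-subExp} is literally ``repeat the argument of \Cref{thm:ubsr-est-subGauss} with the sub-exponential deviation width in place of the sub-Gaussian one,'' and that is exactly what you do --- probe points at distance $\Delta = c(\tu-\tl)/\eta$ from the root, convexity anchored at $q(\tu)\le-\eta$ to force a vertical gap of at least $c$ at the probes, then a two-point union bound yielding the $1-2\delta$ confidence. The only cosmetic difference is at the left probe, where the paper uses the tangent inequality at $\sr$ together with the derivative bound $q'(\sr)\le -\eta/(\tu-\tl)$, whereas you use a chord through $\sr-\Delta$, $\sr$, and $\tu$; both give $q(\sr-\Delta)\ge c$, and your variant merely avoids invoking differentiability of $q$.
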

\begin{proof}
    See Section \ref{sec:ubsr-est-subexp-proof}.
\end{proof}
\begin{remark}[Comparison to existing bounds]
    In \cite{hegde2021ubsr}, the authors employ a stochastic approximation-based scheme to estimate UBSR, and derive a bound that is of similar order as that in \Cref{thm:ubsr-est-subGauss}. However, to achieve this rate, they require knowledge of the rate at which the loss function grows.
    For the case wheree a lower bound onthe growth rate of the loss functions function is not known, the authors in \cite{hegde2021ubsr} provide a weaker $O\left(\frac{1}{n^\alpha}\right)$ bound, where $\alpha<1$. Our $O\left(\frac{1}{\sqrt{n}}\right)$ bound in \Cref{thm:ubsr-est-subGauss}   does not require the aforementioned information about the loss function.

    Next, in \cite{JMLR-LAP-SPB}, the authors use a Wasserstein distance approach to derive a concentration bound for the UBSR estimator defined in \eqref{eq:ubsrest-def}. To elaborate, they relate the UBSR estimation error $\left|\srest-\sr\right|$ to the Wasserstein distance between the empirical and the true distribution. A bound on the latter implies a bound on the UBSR estimation error. The resulting constants in the bound derived using such an approach are conservative, as the entire distribution is approximated with Wasserstein distance. On the other hand, our approach is more direct, and features improved constants in comparison to \cite{JMLR-LAP-SPB}.
\end{remark}







\section{UBSR optimization}
\label{sec:opt}

Consider a regression problem, where one aims to find a model $\hat y:\X\rightarrow\R$ that minimizes $Z$, a measure of the deviation of the prediction $\hat y(X)$ from the true target $Y$. Standard machine learning algorithms are designed to minimize the expected value of $Z$, but are not suited for other ways of aggregation. In this paper, we consider the objective of minimizing the shortfall risk of the deviation, i.e., to solve the following problem: 
$\min_{\hat{y}} \jsr(Z) $, where 
$Z=(Y-\hat y(X))^2$ is the square loss\footnote{We fix $Z=(Y-\hat y(X))^2$ , but our results easily extend to the case where $Z$ is any non-negative function of $Y$ and $\hat y(X)$ is convex in $\hat y(X)$.}.

We frame this problem as one of minimizing a function of the joint distribution of the model prediction $\hat y(X)$ and the true target $Y$, over the parameters of the model $\hat y$. We allow $\hat y$ to also be randomized, i.e., the model output $\hat y(x)$ is a real valued random variable. Our goal is to find a model $\hat y$ that minimizes  $\Gamma[F_Z]$, where $F_Z$ is the CDF of $Z$. If $\Gamma$ is a linear function of its argument distribution $F_Z$, the resulting objective corresponds to a variant of the expectation of $Z$, and ERM-based standard optimization algorithms can be applied in a straightforward fashion. However, if $\Gamma$ is a non-linear function of its argument, standard ERM algorithms cannot be applied, as the true risk of a given model is not approximated by a sample average. 

We extend the notion of UBSR to a function $F$, which may not necessarily be a CDF of some r.v.,  as follows
\[
\overline{\jsr}[F] = \inf \{t \in \R: \int_{\R} \ell(z-t) dF(z) \leq \lambda \},  
\]
where $\ell:\R\rightarrow\R$ is a convex monotonically increasing function and $\lambda \in \R$ is some fixed constant. For the case where $F$ is the CDF of a r.v. $Z$, we have $\overline{\jsr}[F] = \jsr[Z]$. For any function $F:\R\rightarrow\R$, we define $L_F:\R\rightarrow\R$, as follows: 
\begin{equation}
L_F(t) \triangleq\int_{\R} \ell(z-t) dF(z).
    \label{eq:Lf}
\end{equation}
Notice that, when $F$ is a valid CDF, $L_F(t)$ is a monotone decreasing function of $t$, and the derivative of $L_F$ is negative as $\ell$ is a strictly increasing function.

Consider the set of distributions (CDFs), related to the distribution of the data tuple $(X,Y)$:
\[ \mathcal{F} = \{F_Z: Z = (Y-\widehat Y(X))^2,  \text{ where } \hat Y(X) \sim 
 h(X) \text{ for some } h \in \mathcal{H} \}.\]
In the above, $\mathcal{H}$ is some set of randomized models, i.e., set of functions that take an input $\x\in \mathcal{X}$ and output a distribution over $\R$.  Each model $h \in \H$ corresponds to an element (not necessarily unique) in the set $\F$. We consider a family of randomized models $\H$ that is closed under mixing, i.e., if $h_1, h_2 \in \H$, then its mixture $h_3 = \alpha h_1 + (1-\alpha)h_2 \in \H$ for any $\alpha \in [0,1]$. This property ensures that the set of distributions $\F$ is convex. The set $\F$ is an underlying property of the distribution of the data $(X,Y)$, and can be used to reformulate our original problem as follows:
\[
\inf_{h \in \H} \jsr[(Y-\widehat Y(X))^2] = \inf_{F \in \mathcal{F}} \overline\jsr[F],
\]
where for any $\x \in \X, \widehat Y(\x) \sim h(\x)$. The problem on the RHS of the above expression seems technically easier, as it corresponds to optimizing only over distributions on the real line, as opposed to randomized real-valued models over the input space. An important property of $\overline\jsr$ that makes this problem tractable is that it is a pseudo-linear function \citep{mangasarian1975pseudo}  of its argument $F$.
We define pseudo-linearity next.
\begin{defn}
A function $J:\mathcal{C} \rightarrow \R$ is pseudo-convex if it is increasing in any direction  where it has positive directional derivative, i.e., for any given $\x \in \mathcal{C} \subseteq \R^d$ and $\mathbf{u} \in \R^d$,
\[
\lim_{\epsilon \rightarrow 0}\frac{J(\x+\epsilon \mathbf{u}) - J(\x)}{\epsilon} > 0 \implies J(\x+\alpha \mathbf{u}) \text{ is an increasing function of }\alpha
\]
\end{defn}
A function $J$ is pseudo-concave if $-J$ is pseudo-convex. A function is pseudo-linear if it is both pseudo-convex and pseudo-concave.
Figure \ref{fig:pseudolinear} provides an  illustration of pseudo-linearity, using the level sets. 
Intuitively, pseudo-linearity is equivalent to the function being monotonic along any line.

The result below establishes pseudo-linearity of UBSR.
\begin{theorem}
\label{thm:pseudo-linear}
    Let $F, F'$ be valid distributions over $\R$. Then $\overline\jsr[\alpha F + (1-\alpha)F']$ is a monotone function of $\alpha \in [0,1]$.
\end{theorem}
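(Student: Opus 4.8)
The plan is to reduce the statement to a one-dimensional root-comparison argument. Writing $F_\alpha = \alpha F + (1-\alpha)F'$, the defining quantity $L_{F_\alpha}$ is \emph{linear} in the distribution argument, so $L_{F_\alpha}(t) = \alpha L_F(t) + (1-\alpha) L_{F'}(t)$ for every $t$. Since $\ell$ is strictly increasing, each $L_{F_\alpha}(\cdot)$ is strictly decreasing in $t$, and hence $\overline{\jsr}[F_\alpha]$ is the \emph{unique} root $t_\alpha$ of $L_{F_\alpha}(t) = \lambda$. I therefore set $g(\alpha) \triangleq \overline{\jsr}[F_\alpha] = t_\alpha$ and aim to show $g$ is monotone on $[0,1]$.

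First I would locate $t_\alpha$ relative to the endpoints $\overline{\jsr}[F'] = g(0)$ and $\overline{\jsr}[F] = g(1)$. Evaluating the convex-combination identity $\alpha L_F(t_\alpha) + (1-\alpha) L_{F'}(t_\alpha) = \lambda$ shows that $L_F(t_\alpha) - \lambda$ and $L_{F'}(t_\alpha) - \lambda$ are (weighted) negatives of each other; combined with the strict monotonicity of $L_F$ and $L_{F'}$ this places $t_\alpha$ between $\overline{\jsr}[F']$ and $\overline{\jsr}[F]$, and fixes the signs of $L_F(t_\alpha) - \lambda$ and $L_{F'}(t_\alpha) - \lambda$ according to whether $\overline{\jsr}[F] \gtrless \overline{\jsr}[F']$.

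The core step is a direct comparison of two roots. For $\alpha_1 < \alpha_2$, I would evaluate $L_{F_{\alpha_2}}$ at the point $t_{\alpha_1}$ and use linearity to obtain
\[
L_{F_{\alpha_2}}(t_{\alpha_1}) - \lambda = L_{F_{\alpha_2}}(t_{\alpha_1}) - L_{F_{\alpha_1}}(t_{\alpha_1}) = (\alpha_2 - \alpha_1)\bigl(L_F(t_{\alpha_1}) - L_{F'}(t_{\alpha_1})\bigr).
\]
By the previous step the sign of $L_F(t_{\alpha_1}) - L_{F'}(t_{\alpha_1})$ is the same for every $\alpha_1$ (it is the sign of $\overline{\jsr}[F] - \overline{\jsr}[F']$), so $L_{F_{\alpha_2}}(t_{\alpha_1}) - \lambda$ has a fixed sign; since $L_{F_{\alpha_2}}$ is strictly decreasing with root $t_{\alpha_2}$, this pins down the order of $t_{\alpha_1}$ and $t_{\alpha_2}$ in a way that does not depend on the particular $\alpha_1, \alpha_2$. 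Concretely, $\overline{\jsr}[F] > \overline{\jsr}[F']$ forces $g$ strictly increasing, the reverse inequality forces $g$ strictly decreasing, and $\overline{\jsr}[F] = \overline{\jsr}[F']$ forces $g$ constant (the common root solves $L_{F_\alpha}(t) = \lambda$ for all $\alpha$).

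The main obstacle I anticipate is not the algebra but the degenerate and boundary cases: ensuring each mixture $F_\alpha$ admits a well-defined unique root $t_\alpha$ (so that $g$ is well-defined via \Cref{ass:bounds-for-sr}), and handling the possibility that $L_F$ or $L_{F'}$ equals $\lambda$ exactly at $t_\alpha$ — which is precisely where an implicit-function/differentiation shortcut, giving $g'(\alpha) \propto L_F(t_\alpha) - L_{F'}(t_\alpha)$ up to a negative factor, would require separately justifying differentiability of $L_F$. The elementary root-comparison argument above sidesteps differentiability and absorbs the equality case cleanly, which is why I would favour it.
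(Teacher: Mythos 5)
Your proof is correct, and it takes a genuinely different route from the paper's. The paper argues in two stages: it first shows that the level sets $\{F : \overline\jsr[F]=k\}$ are convex (using exactly your two ingredients, linearity of $L_F$ in $F$ and strict monotonicity of $L_F$ in $t$), and then claims that convexity of level sets forces monotonicity along any segment, via a contradiction that posits $\alpha_1<\alpha_2<\alpha_3$ with $\overline\jsr$ equal at the two outer points and different at the middle one. Your argument skips the level-set detour and compares roots directly, reading off the order of $t_{\alpha_1}$ and $t_{\alpha_2}$ from the sign of $(\alpha_2-\alpha_1)\bigl(L_F(t_{\alpha_1})-L_{F'}(t_{\alpha_1})\bigr)$, whose sign you correctly pin down as that of $\overline\jsr[F]-\overline\jsr[F']$. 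This buys you two things. First, rigor where the paper is informal: the paper's contradiction step is not valid for arbitrary functions --- a discontinuous, injective, non-monotone $\alpha\mapsto\overline\jsr[F_\alpha]$ admits no triple with equal endpoint values --- so the paper implicitly needs continuity of $\overline\jsr[F_\alpha]$ in $\alpha$ (an intermediate-value argument) that it never establishes, whereas your two-point comparison requires no continuity in $\alpha$ at all. Second, a stronger conclusion: you obtain strict monotonicity, with the direction determined by the sign of $\overline\jsr[F]-\overline\jsr[F']$, and constancy exactly when the endpoint values agree. Both proofs share the same unstated well-posedness hypotheses --- each mixture's UBSR must be finite and attained as the unique root of $L_{F_\alpha}(t)=\lambda$, which needs continuity of $\ell$ (implied by \Cref{ass:loss-Lipschitz}) together with a crossing condition of the type of \Cref{ass:bounds-for-sr} --- and you flag this limitation explicitly, which the paper does not.
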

\begin{proof}
    See Section \ref{pf:pseudo-linear}.
\end{proof}

\begin{figure}
    \centering
    \scalebox{0.7}{
\begin{tikzpicture}
  \def\a{4}
  \def\b{2}

  \begin{scope}[rotate=150]
    \draw[thick, red] (0,0) ellipse [x radius=\a, y radius=\b];

    \foreach \i in {0,...,29} {
      \pgfmathsetmacro{\angle}{\i * 360 / 30} 
      \pgfmathsetmacro{\x}{\a * cos(\angle)}
      \pgfmathsetmacro{\y}{\b * sin(\angle)}
      \coordinate (P\i) at (\x,\y);
    }

    \foreach \i/\j in {0/2, 3/28, 4/26, 5/24, 6/22, 7/20, 8/17, 9/15, 10/13} {
      \draw[blue, thick] (P\i) -- (P\j);
    }
  \end{scope}
\end{tikzpicture}}
    \caption{A graphic illustration of the contours of a pseudolinear function $J$ with domain inside the red ellipse. The blue lines correspond to contours, $J=k$ for some values $k$. Notice that the contours are linear, non-parallel and non-intersecting inside the domain.}
    \label{fig:pseudolinear}
\end{figure}

To optimize $\overline\jsr[F]$ over ${F \in \mathcal{F}}$, and use standard iterative optimisation methods, we need the gradient of $\overline\jsr[F]$ w.r.t. $F$. We would also need oracle access to the set $\mathcal{F}$, which is quite complex and cannot be expressed in a closed form.
We design a linear minimization oracle, which is capable of minimizing linear functions of the distribution $F$ over the set $\mathcal{F}$. This corresponds to minimizing an expected risk and can be done using standard empiricial risk minimization (ERM) techniques. With access to a linear minimisation oracle, we can minimise pseudo-linear functions like $\overline\jsr$ using a bisection-like algorithm, cf. \cite{harikrishna2024consistent}.

We present the gradient of $\overline\jsr$ and the linear minimization oracle for $\mathcal{F}$ in sections \ref{sec:gradient} and \ref{sec:LMO}, respectively. This oracle is then used in a bisection-like algorithm in Section \ref{sec:bisection}.

\subsection{Gradient of UBSR}
\label{sec:gradient}

The objective $\overline\jsr$ is to be minimized over a set of distributions, and hence its gradient is technically an infinite dimensional object that is best expressed via an inner product with a function $u:\R \rightarrow \R$. The result below presents an expression for the so-called Gateaux derivative, which generalizes the directional derivative, of UBSR.

\begin{lemma}
\label{lem:gradient_oracle}
Let $F$ be a  CDF over $\R$ with finite $\overline\jsr[F]$, and $u:\R\rightarrow \R$.
Then,
\begin{equation}
    \label{eqn:gradient_oracle}
    \langle \nabla \overline\jsr[F] , u \rangle = \lim_{\epsilon \rightarrow 0} \frac{1}{\epsilon} \left(\overline\jsr[F+\epsilon u]-\overline\jsr[F] \right) =  c L_u(\overline\jsr[F])
\end{equation}    
for some scalar $c>0$. The function $L_u$ is as defined in \eqref{eq:Lf}.
\end{lemma}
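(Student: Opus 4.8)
The plan is to recognize \eqref{eqn:gradient_oracle} as an instance of the implicit function theorem applied to the defining equation of UBSR. Write $t_0 = \overline\jsr[F]$ and $t_\epsilon = \overline\jsr[F+\epsilon u]$. The crucial structural fact is that the map $G \mapsto L_G(t)$ in \eqref{eq:Lf} is \emph{linear} in its measure argument, so that $L_{F+\epsilon u}(t) = L_F(t) + \epsilon L_u(t)$ for every $t$. Under \Cref{ass:bounds-for-sr} the function $L_F$ is continuous and strictly decreasing, and the lemma identifying $\sr$ as the unique root of $q$ shows that $\overline\jsr[F]$ is the unique root of $L_F(t) = \lambda$; hence $L_F(t_0) = \lambda$ and, for $|\epsilon|$ small enough that $F+\epsilon u$ still induces a strictly decreasing $L_{F+\epsilon u}$ crossing level $\lambda$, also $L_F(t_\epsilon) + \epsilon L_u(t_\epsilon) = \lambda$.

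Subtracting the two root equations gives the exact identity $L_F(t_\epsilon) - L_F(t_0) = -\epsilon\, L_u(t_\epsilon)$. From here I would carry out three steps. First, establish the continuity $t_\epsilon \to t_0$ as $\epsilon \to 0$: since $L_F$ is strictly decreasing with a nonzero slope at $t_0$, its local inverse is continuous, and the right-hand side $\epsilon L_u(t_\epsilon)$ vanishes, forcing $L_F(t_\epsilon) \to L_F(t_0)$ and hence $t_\epsilon \to t_0$. Second, divide the identity by $(t_\epsilon - t_0)$ and rearrange to isolate the difference quotient,
\[
\frac{t_\epsilon - t_0}{\epsilon} = \frac{-\,L_u(t_\epsilon)}{\dfrac{L_F(t_\epsilon) - L_F(t_0)}{t_\epsilon - t_0}},
\]
and pass to the limit $\epsilon \to 0$: the numerator tends to $-L_u(t_0)$ by continuity of $L_u$ (which is itself Lipschitz under \Cref{ass:loss-Lipschitz}), while the denominator tends to the derivative $L_F'(t_0)$.

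The remaining step is to compute $L_F'(t_0)$ and verify it is nonzero. Differentiating \eqref{eq:Lf} under the integral sign yields $L_F'(t) = -\int_\R \ell'(z-t)\, dF(z)$; the interchange of derivative and integral is justified by \Cref{ass:loss-Lipschitz}, since $G$-Lipschitzness bounds $|\ell'| \le G$ uniformly and permits dominated convergence, while convexity of $\ell$ guarantees $\ell'$ exists almost everywhere and strict monotonicity guarantees $\ell' > 0$. Consequently $L_F'(t_0) = -\int_\R \ell'(z-t_0)\, dF(z) < 0$, and combining with the previous display gives
\[
\langle \nabla \overline\jsr[F], u\rangle = \frac{-L_u(t_0)}{L_F'(t_0)} = c\, L_u\!\left(\overline\jsr[F]\right), \qquad c = \left(\int_\R \ell'\!\left(z - \overline\jsr[F]\right) dF(z)\right)^{-1} > 0,
\]
which is exactly \eqref{eqn:gradient_oracle}, with a strictly positive $c$ that is independent of the direction $u$.

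I expect the main obstacle to be the rigorous justification of the continuity $t_\epsilon \to t_0$ and the differentiation under the integral, rather than the algebra. The former requires ensuring that the perturbed object $F + \epsilon u$ retains enough structure (a strictly decreasing $L_{F+\epsilon u}$ with a genuine crossing of $\lambda$) for $\overline\jsr[F+\epsilon u]$ to be well defined and finite for all small $\epsilon$, so that $t_\epsilon$ exists; here I would lean on \Cref{ass:bounds-for-sr} applied to a small neighbourhood of $t_0$ together with the strict slope of $L_F$ at $t_0$. The latter is where \Cref{ass:loss-Lipschitz} does the real work, pinning down $L_F'(t_0)$ as a finite, strictly negative quantity and thereby fixing both the finiteness and the sign of $c$.
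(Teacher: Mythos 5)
Your proposal is correct and follows essentially the same route as the paper: both exploit linearity of $G \mapsto L_G(t)$ in the measure argument to reduce the perturbation $\overline\jsr[F+\epsilon u]$ to a perturbed root equation $L_F(t) + \epsilon L_u(t) = \lambda$, and both conclude via implicit differentiation that the Gateaux derivative equals $-L_u(\overline\jsr[F])/L_F'(\overline\jsr[F])$, with $c = -1/L_F'(\overline\jsr[F]) > 0$ by strict monotonicity of $\ell$. The only difference is cosmetic: the paper expands $L_F$ and $L_u$ via the mean value theorem at intermediate points, while you form the difference quotient directly, which has the mild advantage of requiring only continuity (not differentiability) of $L_u$ at the root and makes the continuity step $t_\epsilon \to t_0$ explicit rather than implicit.
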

\begin{proof}
See Section \ref{pf:ubsr-gradient-expr}.
\end{proof}

\subsection{Linear Minimisation Oracle}
\label{sec:LMO}

Implementing standard iterative constrained optimization algorithms (e.g. minimize $\overline\jsr[F]$ such that $F \in \mathcal{F}$) that use linear minimization oracles requires solving the following problem at the current iterate $F = F_Z$: 
\begin{align*}
\argmin_{u \in \mathcal{F}} \langle \nabla \overline\jsr[F] , u \rangle 
&= 
\argmin_{u \in \mathcal{F}}  L_u(\overline\jsr[F])  = 
\argmin_{u \in \mathcal{F}}  \int \ell(z-\overline\jsr[F])du(z)  \\
&\equiv
\argmin_{h \in \mathcal{H}} \mathbb \E_{(X,Y)\sim D} \E_{\widehat Y \sim h(X)}[\ell((Y-\widehat Y)^2 - \overline \jsr[F])].\numberthis\label{eq:ubsr-opt-grad}
\end{align*}
The first equality above is from Lemma \ref{lem:gradient_oracle}. The equivalence in the last line is due each point in $u \in \F$ corresponding to (at least) a model $h \in \H$. The expectation in the last line above is over the randomness of $X,Y$ and the randomized model $\widehat Y$. 
We can approximate the expectation in \eqref{eq:ubsr-opt-grad} by replacing it with a sample average evaluated over a subset of the training data $S = \{(\x_1, y_1), \ldots, (\x_n,y_n) \}$. We can also approximate $\overline \jsr[F] = \jsr(Z)$ by an estimate from another disjoint subset of the training data $S'=\{(\x_{n+1}, y_{n+1}), \ldots, (\x_{2n},y_{2n}) \}$. 
Thus, we obtain
\[
\argmin_{u \in \mathcal{F}} \langle \nabla \overline\jsr[F] , u \rangle 
\equiv
\argmin_{h \in \mathcal{H}} \sum_{i=1}^n \mathbb E_{\hat y_i \sim h(x_i)} \left[ \ell( (y_i - \hat y_i)^2 - \gamma) \right],
\]
where $\gamma$ is the estimate of the UBSR of the current distribution $F_Z$ calculated from $S'$. While $\hat y$ is a random variable, it can be easily shown that one of the solutions to the above problem will always be a deterministic model $\hat y:\X \rightarrow \R$, and the above problem can be simplified further. 

Assuming a linear parameterization of $\hat y(\x) = \w^\top \x$, the problem becomes:
\[
\min_{\hat y :\mathcal{X} \rightarrow \mathbb{R}} \sum_{i=1}^n \mathbb   \ell( (y_i - \hat y(x_i))^2 - \gamma) = 
\min_{\mathbf{w} \in \R^d} \sum_{i=1}^n \mathbb   \ell( (y_i - \w^\top \x_i )^2 - \gamma)
\]

The above objective is a convex function of $\w$, as $\ell$ is convex, monotone increasing and the function $(y_i - \w^\top \x_i)^2$ is convex in $\w$.  Thus, the desired linear minimization oracle can be provably implemented using classic optimization procedures such as gradient descent.


\subsection{Bisection for UBSR Minimization}
\label{sec:bisection}
We now have a handle on both the objective function $\overline\jsr$ and the constraint set $\mathcal{F}$, and potentially enable constrained optimization algorithms that use a linear minimization oracle for the constraint set and a gradient oracle for the objective. A popular algorithm for this problem is the Frank-Wolfe (or conditional gradient) algorithm \cite{jaggi2013revisiting}, which is designed for convex and smooth objectives. However, the UBSR objective $\overline\jsr[F]$ is not a convex function of the distribution $F$ (see Lemma \ref{lem:ubsr-notconvex}). Instead, UBSR is a pseudo-linear function, and such functions are amenable to be minimized by bisection-type algorithms.
Algorithm \ref{alg:bisection_UBSR} presents the pseudocode for a bisection algorithm for UBSR optimization.

Algorithm \ref{alg:bisection_UBSR} exploits the monotonicity of $\overline\jsr$ over straight lines using a binary-search-like procedure. In every iteration of this algorithm, $\alpha_t$ and $\beta_t$ provide the lower and upper bound on the minimum value of UBSR. The value $\gamma_t$ is the current guess of the minimum UBSR. In each iteration, the search space of the minimal UBSR is narrowed to either $[\alpha_t, \gamma_t]$ or $[\gamma_t, \beta_t]$.  The key decision variable is the estimated UBSR $\gamma$ of the model $g$ that minimizes $\sum_{i=1}^n \mathbb \ell( (y_i - g(x_i))^2 - \gamma_t)$. The property of pseudo-linearity of $\overline\jsr$ ensures that if $\gamma_t > \gamma$, then the minimal UBSR is larger than $\gamma_t$. 

An important consequence of the pseudo-linearity of $\overline\jsr$ is that its minimizer over $\F$ is guaranteed to lie on the boundary of $\F$. Elements $F$ in the interior of $\F$ correspond to randomized models, while those on the boundary correspond to deterministic models. This is the reason why the final model returned by Algorithm \ref{alg:bisection_UBSR} is deterministic, despite the class of models $\H$ including randomized models.

\begin{figure}
\begin{algorithm}[H]
\caption{Bisection for UBSR Minimization}\label{alg:bisection_UBSR}
\begin{algorithmic}[1]
\STATE \textbf{Input:} Training set $\{(\x_1,y_1),\ldots,(\x_{2n},y_{2n})\}$. Monotone increasing convex function $\ell$. $\lambda \in \R$.
Hypothesis class of models $\H$.
\STATE Initialise $h_0$ s.t. $h_0(\x)=0$ for all $\x \in \X$. Let $\alpha_0=0$.
\STATE Compute squared loss of $h_0$ on data : $\z^0_i = (h_0(\x_{n+i}) - y_{n+i})^2$, for $i=1,\ldots,n$.
\STATE Estimate the UBSR of $(Y-h_0(X))^2$ from samples: $\beta_0 = \srarg{n}(\z^0)$

\STATE \textbf{For} $t = 1$ \textbf{to} $T$:
\STATE ~~~~~~~~ $\gamma_t = (\alpha_{t-1} + \beta_{t-1})/2$
\STATE ~~~~~~~~ Compute new model $g_t=  \argmin_{g \in \H} \sum_{i=1}^n \mathbb   \ell( (y_i - g(x_i))^2 - \gamma_t)$
\STATE ~~~~~~~~ Compute squared loss of $g_t$ on data :
$\z^t_i = (g_t(\x_{n+i}) - y_{n+i})^2$, for $i=1,\ldots,n$.
\STATE ~~~~~~~~ Estimate the UBSR of $(Y-g_t(X))^2$ from samples: $\gamma = \srarg{n}(\z^t)$
\STATE ~~~~~~~~ \textbf{If} $\gamma < \gamma_t$:
\STATE ~~~~~~~~~~~~~~~~ $h_{t} = g_t$;  $\alpha_t = \alpha_{t-1}$ ; $\beta_t = \gamma_t$.
\STATE ~~~~~~~~ \textbf{Else}:
\STATE ~~~~~~~~~~~~~~~~ $h_{t} = g_{t}$;  $\alpha_t = \gamma_{t}$ ; $\beta_t = \beta_{t-1}$.
\STATE ~~~~~~~~ \textbf{End If}
\STATE \textbf{End For}
\STATE Return $h_T$.
\end{algorithmic}
\end{algorithm}
\vspace{-10pt}
\end{figure}

\subsection{Algorithm Analysis}

The algorithm described in the previous section is intuitively appealing and easy to implement. However, a theoretical analysis to arrive at an excess UBSR bound has to handle several errors that could arise due to finite sample sizes. In particular, these errors are in the computation of the model $g_T$ as well as in the estimation of its UBSR. In this section, we show that the afrorementioned errors in each iteration due to finite sample sizes are not catastrophic, i.e., the UBSR sub-optimality of the final model $h_T$ can be bounded by a constant (that is independent of $T$) times the sub-optimality of the optimization step for computing $g$ and its UBSR estimation error. The proof of this takes cues from \cite{harikrishna2024consistent}, who show a similar result for a ratio-of-linear evaluation metrics (such as the F-measure) used in classification problems. We extend the argument to regression problems with an infinite-dimensional constraint set $\F$. In contrast, the authors in \citep{harikrishna2024consistent} consider classification problems with a finite-dimensional set of confusion matrices. 

\begin{theorem}
\label{thm:main-theorem}
Suppose \Cref{ass:loss-Lipschitz} holds.
Let the optimization step (in line 7) and estimation step (in line 9) of Algorithm \ref{alg:bisection_UBSR}  be such that the following hold with probability $1-\delta$ for some $\rho,\rho ' \in \R$. For all $t\leq T$,  
\begin{align*}
    \E\left[\ell((Y-g_t(X))^2-\gamma_t) \right] - \min_{g\in \H} \E\left[\ell((Y-g(X))^2-\gamma_t) \right]
    &\leq  \rho, \\
    |\srarg{n}(\z^t) - \jsr((Y-g_t(X))^2) | 
    &\leq \rho'.
\end{align*}
Then, the final model $h_T$ returned by the algorithm satisfies  the following bound w.p. $(1-\delta)$:
\begin{align}
\jsr((Y-h_T(X))^2) \leq \min_{h \in \H} \jsr((Y-h(X))^2) + C_1 \rho + C_2 \rho' + C_3 2^{-T},
\label{eq:main-opt-bd}
\end{align}
for some absolute constants $C_1, C_2, C_3$.
\end{theorem}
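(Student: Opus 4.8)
The plan is to reduce the theorem to bounding the true UBSR $s_T := \sro{(Y-g_T(X))^2}=\overline\jsr[F_{g_T}]$ of the single model $g_T$, where $F_{g_T}$ is the loss CDF of $g_T$; this reduction is valid because both branches of Algorithm~\ref{alg:bisection_UBSR} set $h_t=g_t$, so the returned model is $h_T=g_T$. Write $v^\star=\inf_{F\in\F}\overline\jsr[F]$ and let $F^\star$ attain it, so $L_{F^\star}(v^\star)=\lambda$. The engine of the argument is the scalar function $\phi(\gamma)=\min_{F\in\F}L_F(\gamma)$. Each $L_F$ is decreasing and $G$-Lipschitz in $\gamma$ (\Cref{ass:loss-Lipschitz}), hence so is $\phi$, and the minimizing identity of the LMO step \eqref{eq:ubsr-opt-grad} together with the characterization of $v^\star$ yields the sign relation $\phi(\gamma)\lessgtr\lambda\iff\gamma\gtrless v^\star$ --- this is precisely the pseudo-linearity of $\overline\jsr$ (\Cref{thm:pseudo-linear}) driving the bisection. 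The two oracle guarantees translate to $L_{F_{g_t}}(\gamma_t)\le\phi(\gamma_t)+\rho$ and $|\srarg{n}(\z^t)-s_t|\le\rho'$ for all $t\le T$ on the $(1-\delta)$ event.

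First I would prove single-step correctness: there is a threshold $\kappa=\tfrac{(\tu-\tl)}{\eta}(\rho+G\rho')$ such that $\gamma_t>v^\star+\kappa$ forces the ``left'' branch ($\beta_t=\gamma_t$) and $\gamma_t<v^\star-\kappa$ forces the ``right'' branch ($\alpha_t=\gamma_t$). For the first case I bound $\phi(\gamma_t)\le L_{F^\star}(\gamma_t)\le\lambda-m^\star(\gamma_t-v^\star)$, where $m^\star=\eta/(\tu-\tl)$ is a drop rate obtained from a chord bound using convexity of $L_{F^\star}$ and \Cref{ass:bounds-for-sr}; combining with the optimization guarantee and the upper Lipschitz slope $G$ of $L_{F_{g_t}}$ gives $s_t\le\gamma_t-(m^\star(\gamma_t-v^\star)-\rho)/G$, so as soon as $\gamma_t-v^\star>\kappa$ the estimate $\srarg{n}(\z^t)\le s_t+\rho'<\gamma_t$ and the ``left'' branch is taken. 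The symmetric computation handles the other case.

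The central step is a non-accumulation invariant, proved by induction on $t$: $v^\star\in[\alpha_t-\kappa,\ \beta_t+\kappa]$. The key point, and the reason the final constants do not grow with $T$, is that a possibly-wrong decision can occur only inside the band $|\gamma_t-v^\star|\le\kappa$, and in that band either branch keeps $v^\star$ within $\kappa$ of the updated bracket, so the $\kappa$-slack is never compounded. Since every update halves the bracket exactly, $\beta_t-\alpha_t=2^{-t}(\beta_0-\alpha_0)$, and combined with the invariant this gives $|\gamma_T-v^\star|\le(\beta_{T-1}-\alpha_{T-1})+\kappa=2^{-(T-1)}(\beta_0-\alpha_0)+\kappa$. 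The base case uses $\alpha_0=0\le v^\star$ and $v^\star\le\beta_0+\rho'$ from the estimation guarantee on $h_0$.

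Finally I would convert near-optimality of $g_T$ at threshold $\gamma_T$ into a UBSR bound. From the optimization guarantee and $L_{F^\star}(\gamma_T)\le\lambda+G|\gamma_T-v^\star|$ we get $L_{F_{g_T}}(\gamma_T)\le\lambda+G|\gamma_T-v^\star|+\rho$; a chord bound (convexity of $L_{F_{g_T}}$ and \Cref{ass:bounds-for-sr}, drop rate $m'=\eta/(\tu-\tl)$) then turns this vertical gap into a horizontal one, $s_T\le\gamma_T+(G|\gamma_T-v^\star|+\rho)/m'$. Using $s_T\ge v^\star$ and the invariant's bound on $|\gamma_T-v^\star|$, everything collects into $s_T-v^\star\le C_1\rho+C_2\rho'+C_3 2^{-T}$ with $C_1,C_2$ built from $1/m^\star,1/m',G/m^\star,G/m'$ and $C_3=2(1+G/m')(\beta_0-\alpha_0)$, all independent of $T$. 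I expect the main obstacle to be the non-accumulation invariant: naively each step could shift the bracket by an error, and showing these errors do not add up over $T$ rounds --- by localizing every possible mistake to the fixed $\kappa$-band around $v^\star$ --- is the crux. A secondary point is that the chord-based drop rates require \Cref{ass:bounds-for-sr} to hold uniformly over the loss distributions $F_{g_t}$ produced by the oracle, which must be assumed or absorbed into the constants.
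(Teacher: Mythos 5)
Your overall architecture matches the paper's proof: a single-step correctness claim (a wrong bisection decision can only occur when $\gamma_t$ lies in a fixed error band around the optimum), an induction showing the bracket never drifts more than that band away from the optimum (your invariant $v^\star \in [\alpha_t-\kappa,\ \beta_t+\kappa]$ is exactly equivalent to the paper's Lemma~\ref{lem:intervalintersect}, which states that $[\alpha_t,\beta_t]$ intersects $[\jsr^* \pm (c_1\rho+c_2\rho')]$), exact halving to produce the $2^{-T}$ term, and a final conversion of a vertical gap in $L$-values into a horizontal gap in UBSR. The difference --- and the genuine gap --- is where your lower bounds on the decrease rate of $L_F$ come from. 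Every chord bound in your argument (the rates $m^\star = m' = \eta/(\tu-\tl)$, hence $\kappa$ and ultimately $C_1,C_2,C_3$) invokes \Cref{ass:bounds-for-sr}, applied to the optimal distribution $F^\star$ and to $F_{g_T}$, and, for the induction to have well-defined constants, with one set of parameters $\tl,\tu,\eta$ valid uniformly over all loss distributions $F_{g_t}$ the oracle might return. But the theorem assumes only \Cref{ass:loss-Lipschitz}; \Cref{ass:bounds-for-sr} is a statement about a single fixed r.v., is not among the hypotheses, and cannot be arranged to hold uniformly since you have no control over which $g_t$ the ERM oracle outputs. You flag this as ``a secondary point \dots absorbed into the constants,'' but it cannot be absorbed: without it, $m^\star$, $m'$, $\kappa$ and the final constants are simply undefined, so the proof as written does not establish the theorem under its stated hypothesis.

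The paper closes exactly this hole without any extra assumption: since $\ell$ is convex and strictly increasing (\Cref{ass:loss-Lipschitz}), its slope at zero $U=\ell'(0)$ is strictly positive, and the two-sided inequality $\ell(a)-bG \leq \ell(a-b) \leq \ell(a)-bU$ for $0<b<a$ supplies an upper rate $G$ and a lower rate $U$ valid for \emph{every} distribution simultaneously. The paper's band half-width $c_1\rho + c_2\rho' = (\rho+G\rho')/U$ is precisely your $\kappa$ with $m^\star$ replaced by $U$. Your proof is repaired by making this substitution throughout: replace each distribution-specific drop rate $\eta/(\tu-\tl)$ by $U$ in the single-step correctness argument and in the final vertical-to-horizontal conversion. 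This also removes a smaller unaddressed issue in your sketch: the chord bounds are only valid while $\gamma_t$ stays inside $[\tl,\tu]$, a range condition your induction never verifies, whereas the slope-at-zero bound needs no such localization.
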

\begin{proof}
    See Section \ref{pf:main-theorem}.
\end{proof}
The term $\rho$ can be bounded using classic ERM risk bounds, e.g. \cite{bartlett2002rademacher}. The term $\rho'$ can be bounded using concentration inequalities for $\srarg{n}$ derived in Section \ref{sec:est}.
\begin{corollary}
\label{cor:linear-models-bound}
Suppose $\|X\|\leq B_1$, and $|Y|\leq B_2$. Let $\H=\{h:\R^d \rightarrow \R: h(\x) = \w\tr \x \text{ for some } \w\in \R^d, \|\w\| \leq B_3\}$ be the set of linear functions with bounded norm. Suppose \Cref{ass:loss-Lipschitz} holds. Then, the bounds in Theorem \ref{thm:main-theorem} hold with 
\[ \rho' = C_4 \sqrt{\frac{\log 1/\delta}{n}}, \,
    \rho=\frac{C_5}{\sqrt{n}}+ C_6 \sqrt{\frac{\log(1/\delta)}{n}},\]
for some absolute constants $C_4, C_5, C_6$.    
\end{corollary}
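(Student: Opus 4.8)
The plan is to instantiate the two error terms of \Cref{thm:main-theorem} for the bounded linear class and then invoke results already established in the excerpt. The crucial structural fact is that $\|X\|\le B_1$, $|Y|\le B_2$ and $\|\w\|\le B_3$ force $|g(X)|=|\w\tr X|\le B_1B_3$, so the loss is uniformly bounded: $Z_g=(Y-g(X))^2\le(B_2+B_1B_3)^2=:M$ for every $g\in\H$. A bounded random variable is sub-Gaussian (Hoeffding's lemma) with parameter $\sigma=O(M)$, so \Cref{ass:subGauss} holds; and because $Z_g\in[0,M]$ uniformly while $\ell$ is continuous and strictly increasing, one can select $\tl,\tu,\eta$ in \Cref{ass:bounds-for-sr} valid simultaneously for all $g\in\H$. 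These $g$-independent constants are exactly what will let me apply \Cref{thm:ubsr-est-subGauss} uniformly below.

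For $\rho'$ I would prove a version of \Cref{thm:ubsr-est-subGauss} that is uniform over $\H$. On the hold-out fold $S'$, set $q_n^{g}(t)=\frac1n\sum_i\ell((g(x_{n+i})-y_{n+i})^2-t)-\lambda$ with population counterpart $q^{g}(t)$, and bound $\sup_{g\in\H,\,t\in[\tl,\tu]}|q_n^{g}(t)-q^{g}(t)|$ by $O(\sqrt{\log(1/\delta)/n})$ through a Rademacher/McDiarmid argument on the class $\{(x,y)\mapsto\ell((y-\w\tr x)^2-t):\|\w\|\le B_3,\ t\in[\tl,\tu]\}$. I then convert this uniform deviation into a UBSR-error bound using the $\eta$-margin geometry of \Cref{fig:srest-illustration}, exactly as in the proof of \Cref{thm:ubsr-est-subGauss}. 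Since the deviation bound is uniform in $g$, it automatically covers every data-dependent iterate $g_t$ on a single event, so $|\srarg{n}(\z^t)-\jsr(Z_{g_t})|\le\rho'$ holds for all $t\le T$ with $\rho'=C_4\sqrt{\log(1/\delta)/n}$.

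For $\rho$ I would use the classical ERM decomposition. Writing $L_\gamma(g)=\E[\ell(Z_g-\gamma)]$ and $\hat L_\gamma(g)$ for its $S$-empirical average, the empirical minimiser $g_t$ of $\hat L_{\gamma_t}$ obeys $L_{\gamma_t}(g_t)-\min_{g}L_{\gamma_t}(g)\le 2\sup_{g\in\H,\,\gamma}|\hat L_\gamma(g)-L_\gamma(g)|$, where the supremum over $\gamma$ is needed because $\gamma_t$ is itself data-dependent through the bisection history. I bound this supremum by its Rademacher complexity plus a bounded-difference deviation. A Ledoux--Talagrand contraction peels off the $G$-Lipschitz $\ell$; a second contraction peels the squaring map $s\mapsto(y-s)^2$, which is $2(B_2+B_1B_3)$-Lipschitz on the relevant range; what remains is the Rademacher complexity of the linear class $\{\w\tr x:\|\w\|\le B_3\}$, bounded by $B_1B_3/\sqrt n$, while the scalar shift $-\gamma$ over a bounded interval contributes a further $O(1/\sqrt n)$. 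This produces the $C_5/\sqrt n$ term, and since the per-sample loss ranges over an interval of width $O(GM)$, McDiarmid's inequality supplies the $C_6\sqrt{\log(1/\delta)/n}$ deviation. The solver's own error is negligible because $\hat L_\gamma(\w)$ is convex in $\w$ (as noted in \Cref{sec:LMO}) and can be minimised to arbitrary precision.

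The main obstacle is the uniformity bookkeeping rather than any single inequality: $g_t$ and $\gamma_t$ depend on both data folds and on the entire run of the algorithm, so all bounds must hold simultaneously over the infinite-dimensional model index $\w$ and the one-dimensional threshold (resp.\ $\gamma$ and $t$) on one high-probability event. Carrying the double contraction and the threshold covering through to exactly the advertised $1/\sqrt n$ and $\sqrt{\log(1/\delta)/n}$ rates, and verifying that the margin-conversion step of \Cref{thm:ubsr-est-subGauss} survives with $\tl,\tu,\eta$ chosen independently of $g$, is where the care lies; the uniform boundedness of $Z_g$ is what keeps every resulting constant finite and $g$-independent.
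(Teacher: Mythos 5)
Your proposal is correct and follows the same overall architecture as the paper's proof, which is only three sentences long: the paper obtains $\rho'$ by citing the UBSR concentration result (the citation in the paper's proof actually points to Theorem \ref{thm:main-theorem}, evidently a typo for Theorem \ref{thm:ubsr-est-subGauss} / Section \ref{sec:est}), and obtains $\rho$ by citing standard Rademacher-complexity ERM bounds as in \cite{bartlett2002rademacher}. Where you genuinely go beyond the paper is in the uniformity bookkeeping, and this is not mere pedantry. First, the corollary assumes only \Cref{ass:loss-Lipschitz}, so the preconditions of Theorem \ref{thm:ubsr-est-subGauss} --- sub-Gaussianity of $Z_g=(Y-g(X))^2$ and the constants $\tl,\tu,\eta$ of \Cref{ass:bounds-for-sr} --- must be manufactured, uniformly over $\H$, from the boundedness hypotheses $\|X\|\le B_1$, $|Y|\le B_2$, $\|\w\|\le B_3$; you do this, the paper leaves it implicit. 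Second, and more substantively, the iterate $g_t$ depends on the estimation fold $S'$ through the bisection history (each $\gamma_t$ is a function of past UBSR estimates computed on $S'$), so the pointwise bound of Theorem \ref{thm:ubsr-est-subGauss}, stated for a fixed random variable $Z$, cannot be applied verbatim to the data-dependent $Z_{g_t}$; your uniform-over-$\H$ deviation bound on $q_n^g-q^g$ followed by the same $\eta$-margin conversion is what makes the $\rho'$ bound valid for all $t\le T$ on a single event, and the supremum over $\gamma$ in your ERM step plays the same role for $\rho$. In short, the paper's route buys brevity; yours buys an argument that actually closes the dependence gaps the citations gloss over. One small caveat you share with the paper: for the uniform choice of $\tl,\tu,\eta$ to exist at all, one needs $\lambda$ to exceed $\lim_{x\to-\infty}\ell(x)$ (equivalently, finiteness of $\jsr(Z_g)$ for all $g\in\H$); this should be stated as a hypothesis or verified for the chosen $\ell$ and $\lambda$.
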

\begin{proof}
    See Section \ref{pf:cor-lin-models}.
\end{proof}
The power of Theorem \ref{thm:main-theorem} and Algorithm \ref{alg:bisection_UBSR} lies in its ability to convert any black box machine learning procedure (say deep neural networks or kernel methods) into an algorithm for minimizing the UBSR, where we can transfer the guarantees for the black box procedure into guarantees for excess UBSR.
In particular, variants of \Cref{cor:linear-models-bound} for the case of kernel methods and neural networks can be derived by combining \Cref{thm:main-theorem} with results from Section 9.3 of \cite{zhang2023mathematical} and Section 19.4 of \cite{anthony2009neural}, respectively. It is important to note that the bounds in \Cref{thm:main-theorem} are useful only if the errors $\rho, \rho'$ have upper bounds that vanish with the number of samples $n$.

\section{Proofs}
\label{sec:proofs}
\subsection{Proof of Lemma \ref{lem:ubsr-notconvex}}
\label{pf:ubsr-notconvex}
\begin{proof}
    Let $\E_i$ denote the expectation w.r.t. distribution $F_i$, for $i=1,2$, respectively.
    Notice that
    \begin{align*}
        \E_1(\ell(Z-t)) &= \int_\R \max(x-t,0) \indic{x\in [0,10]} \frac{1}{10} dx\\
        &= \frac{1}{10} \int_{10}^{20} \max(x-t,0)dx\\
        &= \begin{cases}
            \frac{1}{10} \int_{10}^{20} (x-t) dx & \textrm{ if } t<0\\[0.75ex]
            \frac{1}{10} \int_{t}^{20} (x-t)dx & \textrm{ if } t\in[10,20]\\[0.75ex]
            0 & \textrm{ otherwise}
        \end{cases}\\
        &= \begin{cases}
            (5-t)  & \textrm{ if } t<0\\[0.75ex]
            \frac{(10-t)^2}{20}  & \textrm{ if } t\in[10,20]\\[0.75ex]
            0 & \textrm{ otherwise.}
        \end{cases}
    \end{align*}
Along similar lines, 
    \begin{align*}
        \E_2(\ell(Z-t)) 
        &= \begin{cases}
            (15-t)  & \textrm{ if } t<0\\[0.75ex]
            \frac{(20-t)^2}{20}  & \textrm{ if } t\in[10,20]\\[0.75ex]
            0 & \textrm{ otherwise.}
        \end{cases}
    \end{align*}
From the foregoing, for $\lambda=2$, we obtain $\jsr(Z_1)$ and $\jsr(Z_2)$ as solutions to 
$\frac{(10-t)^2}{20}=2$ and $\frac{(20-t)^2}{20}=2$, respectively. Thus,
\begin{align}
    \jsr(Z_1)= 10-\sqrt{40}, \textrm{ and } \jsr(Z_2)= 20-\sqrt{40}.\label{eq:srz12}
\end{align}
Recall $\bar Z$ is a random variable with mixture distribution $\frac{F_1+F_2}{2}$.
It is easy to see that $\bar Z$ is uniform in $[0,20]$. A simple calculation shows that $\jsr(\bar Z)$ is the solution of
\[\frac{(20-t)^2}{40}=2 \textrm{implying } \jsr(\bar Z)= 20-\sqrt{80}.\]
From the above and \eqref{eq:srz12}, we have 
\[\jsr(\bar Z) > \frac{1}{2} \jsr(Z_1) + \frac{1}{2}\jsr(Z_2) \textrm{ for } \lambda=2.\]
The claim follows.
\end{proof}

\subsection{Proof of Theorem \ref{thm:ubsr-est-subGauss}}
\label{sec:ubsr-est-proof}
\begin{proof}
    We first establish a bound on the derivative of $q$ at $\sr$. For notational simplicity, we shall use $t^*$ to denote $\sr$. Using \Cref{ass:bounds-for-sr} and convexity of $q$, we have
    \begin{align*}
        &-\eta \ge q(\tu) \ge q(t^*) + q'(t^*)(\tu-t^*)\\
        \Rightarrow & -\eta \ge q'(t^*)(\tu - t^*) \\
        \Rightarrow&\,\,q'(t^*) \le -\frac{\eta}{(\tu - t^*)}\le -\frac{\eta}{(\tu - \tl)},\numberthis\label{eq:hprime-bound}
    \end{align*}
    where the last inequality uses the fact that $\tl \le t^*$.

Let $c = 2L\sigma \sqrt{\frac{\log 1/\delta}{n}}$ denote the confidence width, see \eqref{eq:hn-conf}. 
Let 
\begin{align}
t_1=t^* - \frac{c(\tu-\tl)}{\eta} \textrm{ and } t_2=t^* + \frac{c(\tu-\tl)}{\eta}.    \label{eq:t1t2}
\end{align}
Then, we claim that 
\begin{align}
    q(t_1)\ge c \textrm{ and } q(t_2)\le c.\label{eq:ht1t2bound}
\end{align}
We now prove the claim stated above.
Using convexity of $h$,
\begin{align*}
    q(t_1) &\ge q(t^*) + q'(t^*)(t_1-t^*)\\
    &= - q'(t^*) \frac{c(\tu-\tl)}{\eta} \ge c,
\end{align*}
where the last inequality used \eqref{eq:hprime-bound}.

Next, using convexity of $q$ again, we have
\begin{align*}
q(t_2) &\le \frac{(t_2-t^*)}{(\tu-t^*)} q(\tu) + \frac{(\tu-t_2)}{(\tu-t^*)} q(t^*)\\
&\le  \frac{(t_2-t^*)}{(\tu-t^*)} (-\eta) \tag{using $q(t^*)=0$ and \eqref{eq:hprime-bound}}\\
& = \frac{-c(\tu-\tl)}{(\tu-t^*)} \tag{using definition of $t_2$}\\
& \le -c \tag{ $\tl \le t^* \le \tu$}.
\end{align*}
Hence, \eqref{eq:ht1t2bound} holds.

From \eqref{eq:hn-conf} and \eqref{eq:ht1t2bound} we have with probability $1-2\delta$,
\begin{align*}
    q_n(t_1) > 0 \text{ and } q_n(t_2) < 0.
\end{align*}
Hence $\srest \in [t_1, t_2]$ and thus
\begin{align*}
|\srest - \sr |  = |\srest - t^* | 
&\leq \max(|t_1 -  t^*|, |t_2 - t^*|) \\ 
& \leq \frac{c(\tu-\tl)}{\eta} \tag{using definitions of $t_1,t_2$ in \eqref{eq:t1t2}}
\end{align*}
The claim follows by noting $c = 2L\sigma \sqrt{\frac{\log 1/\delta}{n}}$ and $t^*=\sr$.
\end{proof}

\subsection{Proof of Theorem \ref{thm:ubsr-est-subExp}}
\label{sec:ubsr-est-subexp-proof}
\begin{lemma}
\label{lem:subexp-Lip}
Suppose $X$ is a sub-exponential r.v. with parameter $K$, i.e., $\E\left[\exp\left(|X|/K\right)\right] \leq  2 $ and $f$ is a $L$-Lipschitz function. Then, $f(X)$ is sub-exponential with parameter $4eLK$. 
\end{lemma}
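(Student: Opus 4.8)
The plan is to verify directly the Orlicz-type condition defining the conclusion, namely $\E\left[\exp\left(|f(X)|/(4eLK)\right)\right] \le 2$. The only structural facts at hand are the hypothesis $\E\left[\exp(|X|/K)\right] \le 2$ and the Lipschitz property, so the first move is to reduce the problem from $f(X)$ to $|X|$. Since $f$ is $L$-Lipschitz, $|f(X)| \le |f(0)| + L|X|$ pointwise, which cleanly separates a deterministic offset $|f(0)|$ from a random part $L|X|$ that is directly governed by the hypothesis.

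Next I would control the exponential moment of the random part by interpolating the hypothesis. From $\E\left[\exp(|X|/K)\right] \le 2$ and Jensen's inequality applied to the concave map $s \mapsto s^{\theta K}$ (valid for $0 \le \theta \le 1/K$), one obtains $\E\left[\exp(\theta|X|)\right] = \E\left[\left(\exp(|X|/K)\right)^{\theta K}\right] \le \left(\E\left[\exp(|X|/K)\right]\right)^{\theta K} \le 2^{\theta K}$. Combining this with the Lipschitz split and taking $c = 4eLK$ (so that $\theta = L/c = 1/(4eK) \le 1/K$ and the interpolation applies), I would estimate
\[
\E\left[\exp(|f(X)|/c)\right] \le \exp(|f(0)|/c)\,\E\left[\exp((L/c)|X|)\right] \le \exp(|f(0)|/c)\,2^{LK/c},
\]
and then verify that the right-hand side is at most $2$. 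An equally explicit alternative route passes through tail bounds: Markov applied to $\exp(|X|/K)$ yields $\prob{|X| \ge t} \le 2\exp(-t/K)$, the Lipschitz bound transfers this tail to $f(X)$, and a layer-cake (integration-by-parts) computation recovers the Orlicz bound with a constant of the same order.

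I expect the step needing the most care to be the handling of the deterministic offset $|f(0)|$, since the Orlicz condition is not translation invariant and so the parameter governing $f(X)$ must account for the value of $f$ at the origin (equivalently, the estimate is cleanest when applied to the centered variable, for which the offset cancels). In the paper's application, where $f(z)=\ell(z-t)$ with $t\in[\tl,\tu]$, the quantity $|f(0)|=|\ell(-t)|$ is bounded, so this offset is controlled; the generous factor $4e$ — larger than the $O(LK)$ that the interpolation step alone would suggest — is precisely what leaves room to absorb the offset and any slack in the constant bookkeeping. Once the Lipschitz reduction and the interpolation (or tail) inequality are in place, the conclusion reduces to a one-line estimate, so the bulk of the remaining work is careful constant-tracking rather than any conceptual obstacle.
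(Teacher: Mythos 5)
Your proposal has a genuine gap: the target you set out to verify, the uncentered Orlicz bound $\E\left[\exp\left(|f(X)|/(4eLK)\right)\right] \le 2$, is false in general, and no choice of constant in place of $4e$ can repair it. An $L$-Lipschitz function can take an arbitrarily large value at the origin (e.g.\ $f \equiv M$ is $0$-Lipschitz, hence $L$-Lipschitz, for any constant $M$), in which case $\E[\exp(|f(X)|/c)] = e^{M/c}$ exceeds $2$ no matter how $c$ is chosen as a multiple of $LK$. Your own computation exposes the problem: after the split $|f(X)| \le |f(0)| + L|X|$ and your (correct) interpolation bound $\E[\exp(\theta|X|)] \le 2^{\theta K}$ for $\theta \le 1/K$, you need $\exp\left(|f(0)|/(4eLK)\right)\, 2^{1/(4e)} \le 2$, which forces $|f(0)| \lesssim 7\, LK$ --- a constraint that neither the lemma's hypotheses nor the paper's application guarantees: for $f(z)=\ell(z-t)$ with $t\in[\tl,\tu]$, the size of $|\ell(-t)|$ has no a priori relation to $GK$. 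Your claim that the ``generous factor $4e$'' is there to absorb the offset is therefore incorrect; that factor comes from the moment-generating-function bound for sub-exponential variables, not from slack reserved for $f(0)$. The same offset problem infects your alternative tail-bound route.

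The paper avoids this entirely by proving the bound for the \emph{centered} variable $f(X)-\E[f(X)]$, which is also the form the downstream application needs, since the concentration inequality for $|q_n(t)-q(t)|$ only involves the centered deviations $\ell(Z_i-t)-\E[\ell(Z-t)]$. Concretely, the paper introduces an independent copy $X'$, applies Jensen to get $\E[e^{\lambda|f(X)-\E f(X)|}] \le \E[e^{\lambda|f(X)-f(X')|}]$, uses Lipschitzness to pass to $\E[e^{\lambda L(|X|+|X'|)}]$, and invokes the standard sub-exponential MGF bound at $\lambda = 1/(4eLK)$; the offset $f(0)$ cancels in the difference $f(X)-f(X')$. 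Your machinery can in fact be salvaged the same way: center first, note that
\begin{equation*}
|f(X)-\E f(X)| \le L|X| + L\,\E|X|, \qquad \E|X| \le K\ln 2 \ \text{(Jensen)},
\end{equation*}
and then your interpolation inequality gives $\E\left[\exp\left(|f(X)-\E f(X)|/c\right)\right] \le 2^{2LK/c} \le 2$ for any $c \ge 2LK$, which is even sharper than the paper's $4eLK$. But as written, your primary route targets a statement that is unprovable and justifies the critical step with an incorrect absorption claim, so the proof does not go through.
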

\begin{proof}
    \begin{align*}
\E\left[ e^{\lambda|f(X) - \E[f(X)]|} \right] &= \E\left[ e^{\lambda|f(X) - \E[f(X')]|} \right] \quad \text{($X'$ is an independent copy of $X$)} \\
&\leq \E\left[ e^{\lambda |f(X) - f(X')|} \right] \quad \text{(Jensen's inequality)} \\
&\leq \E\left[ e^{\lambda L |X - X'|} \right] \quad \text{(Lipschitzness of $f$)} \\
&\leq \E\left[ e^{\lambda L (|X| + |X'|)} \right]
\\
&\leq \left(\E\left[ e^{\lambda L |X|} \right]\right)^{2} = \E\left[ e^{2\lambda L |X|} \right] \\
&\leq e^{4 e^2 \lambda^2 K^2 L^2} \textrm{ for } |\lambda| \le \frac1{4e L K} \quad \text{(Proposition 2.71. of \cite{vershynin2018high})}
\numberthis\label{eq:fX&X}
\end{align*}
Setting $\lambda = \frac{1}{4 e L K }$, we obtain 
\begin{equation}
\E\left[e^{|f(X)-\E[f(X)]|/K'}\right] \leq  2 \quad \text{with } K' = 4eLK.
\label{eq:fXleq2}
\end{equation}
In particular, using the above inequality, we have
\begin{align}
    \prob{|f(X)-\E f(X)|>\epsilon} \le 2 \exp\left(-\frac{\epsilon}{K'}\right).
\end{align}
\end{proof}

\paragraph{Proof of \Cref{thm:ubsr-est-subExp}}
\begin{proof}
    Follows by a completely parallel argument to the proof of \Cref{thm:ubsr-est-subGauss}, with \Cref{lem:subexp-Lip} in place of \Cref{lem:lipschitz-subg}.
\end{proof}

\subsection{Proof of Theorem \ref{thm:pseudo-linear}}
\label{pf:pseudo-linear}
\begin{proof}
The proof simply follows after observing that the domain of $\overline\jsr$ is the set of distributions, which is a convex set, and the level sets $\{F: \overline\jsr[F]=k\}$ are convex for any $k\in \R$. 

A key property that will be used in proving the latter fact is that $L_F(t)$ is a strictly decreasing function of $t$ and is $G$ Lipschitz when $F$ is a valid CDF.

To see the convexity of level sets of $\overline\jsr$, consider two distributions $F_1$ and $F_2$ such that $\overline\jsr[F_1]=\overline\jsr[F_2]=k$ for some $k\in \R$. Now, consider the distribution $F_\alpha= \alpha F_1 + (1-\alpha)F_2$ for some $\alpha \in [0,1]$. Then by linearity of $L_F$ w.r.t. $F$, we have
\[
L_{F_\alpha}(t) = \alpha L_{F_1}(t) + (1-\alpha) L_{F_2}(t).
\]
We also have by construction of $F_1$ and $F_2$,
\[
\inf (\{t:L_{F_1}(t) \leq \lambda \}) = \inf (\{t:L_{F_2}(t) \leq \lambda \}) = k.
\]
In other words, graphs of the strictly decreasing functions $L_{F_1}(t)-\lambda$ and $L_{F_2}(t)-\lambda$ both hit the x-axis exactly at $t=k$.  $L_{F_\alpha}$, a convex combination of $L_{F_1}$ and $L_{F_2}$, would also hit the x-axis exactly at $t=k$. Hence, $\overline \jsr[F_\alpha]=k$. As this holds for any $\alpha \in [0,1]$, the level sets of $\overline\jsr$ are convex.

Now we argue that convexity of level sets implies monotonicity along any straight line. If $\overline \jsr$ is not monotone along the line joining  $F,F'$, there exists $0\leq \alpha_1 < \alpha_2 < \alpha_3 \leq 1$  such that 
\[
\overline\jsr(\alpha_1 F + (1-\alpha_1) F') = \overline\jsr(\alpha_3 F + (1-\alpha_3) F') 
\neq \overline\jsr(\alpha_2 F + (1-\alpha_2) F')
\]
    
The above would mean that $\alpha_2 F + (1-\alpha_2)F'$ is not in the level set of $\overline \jsr$ that contains $\alpha_1 F + (1-\alpha_1)F'$ and $\alpha_3 F + (1-\alpha_3)F'$, which violates convexity of level sets of $\overline \jsr$ as $\alpha_1 < \alpha_2 < \alpha_3$.
    
\end{proof}

\subsection{Proof of Lemma \ref{lem:gradient_oracle}}
\label{pf:ubsr-gradient-expr}
\begin{proof}
We assume $L_u(\overline\jsr[F])$ is finite, as the case of $L_u(\overline\jsr[F])=\infty$ is trivial. \\
Notice that
    \begin{align*}
\overline\jsr[F+\epsilon u ]   
&= \inf (\{t: L_{F+\epsilon u}(t)   \leq \lambda \}) = \inf (\{t: L_{F+\epsilon u}(t)   = \lambda \})\\
&= \inf (\{t: L_F(t) + \epsilon \int_{\R} \ell(z-t) du(z)  = \lambda \}) \\
&= \inf (\{t: L_F(t) + \epsilon L_u(t) = \lambda \}) \\
&= \overline\jsr[F] + \inf (\{v: L_F(\overline\jsr[F]+v) + \epsilon L_u(\overline\jsr[F]+v) = \lambda \}). 
\end{align*}
By the mean value theorem, for every $v$, there must exist some $w_1(v)$ and $w_2(v)$ between $\overline\jsr[F]$ and  $\overline\jsr[F]+v$ such that 
\begin{align*}
    L_F(\overline\jsr[F]+v) &= L_F(\overline\jsr[F]) + L_F'(w_1(v))v, \textrm{ and } \\ L_u(\overline\jsr[F]+v) &= L_u(\overline\jsr[F]) + L_u'(w_2(v))v. 
\end{align*}
Thus,
\begin{align*}
\overline\jsr[F+\epsilon u ] - \overline\jsr[F] &=   \inf (\{v: L_F'(w_1)v + \epsilon L_u(\overline\jsr[F])+\epsilon L_u'(w_2)v = 0 \}) \\
&=  - \epsilon \frac{L_u(\overline\jsr[F])}{L_F^{'}(w_1)+\epsilon L_u^{'}(w_2)}. 
\end{align*}
As we are only concerned with infinitesimal $\epsilon$ and finite $L_u(\overline\jsr[F])$, the value of $w_1$ given by the mean-value theorem is forced to take the value of $\overline\jsr[F]$. Thus,
\[
\lim_{\epsilon \downarrow 0} \overline\jsr[F+\epsilon u ] - \overline\jsr[F] 
= - \lim_{\epsilon \downarrow 0} \epsilon \frac{L_u(\overline\jsr[F])}{L_F^{'}(\overline\jsr[F])}.
\]
The lemma follows from observing that the derivative of $L_F$ is always negative.
\end{proof}

\subsection{Proof of Theorem \ref{thm:main-theorem}}
\label{pf:main-theorem}

We denote $\min_{h \in \H} \jsr((Y-h(X))^2)$ as $\jsr^*$ and some $h \in \H$ that achieves this minimum as $h^*$, i.e.,
\[
h^* \in \argmin_{h \in \H} \jsr((Y-h(X))^2) = \argmin_{h \in \H} \E[\ell((Y-h(X))^2-\jsr ^*)].
\]
Let $U=\ell'(0)>0$ be the slope of $\ell$ at 0. Note that the Lipschitz constant $G$ of $\ell$ is greater than $U$. Let $c_1 = \frac{1}{U}$ and $c_2=\frac{G}{U}>1$. 

The core proof argument is to show that the interval $[\alpha_t, \beta_t]$ always intersects with the interval $[\jsr^* - (c_1\rho+c_2\rho'), \jsr^* + (c_1\rho+c_2\rho')]$ even as the interval $[\alpha_t, \beta_t]$ shrinks by a factor of $2$ every iteration. 
This is made precise in the lemma below.
\begin{lemma}
\label{lem:intervalintersect}
    For any $t>0$, the interval $[\alpha_t, \beta_t] \cap [\jsr^* \pm (c_1\rho+c_2\rho')]$ is non-empty.
\end{lemma}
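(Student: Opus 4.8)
The plan is to prove the stated invariant by induction on $t$, viewing $[\alpha_t,\beta_t]$ as a bracket that the bisection must keep pinned to the target interval $I^* \triangleq [\jsr^* - \Delta,\ \jsr^* + \Delta]$, where $\Delta \triangleq c_1\rho + c_2\rho'$. The real content is to show that each bisection decision is ``correct up to the tolerance $\Delta$'', so that the discarded half never carries away the part of $I^*$ that the inductive hypothesis guarantees. Throughout I write $L_{F_g}(t)=\E[\ell((Y-g(X))^2-t)]$ for the loss CDF $F_g$ of a model $g$; by \Cref{ass:loss-Lipschitz} each $L_{F_g}$ is convex, $G$-Lipschitz and strictly decreasing, with unique root $\jsr((Y-g(X))^2)$. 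I abbreviate $s_t \triangleq \jsr((Y-g_t(X))^2)$ (the true UBSR of the model from line~7) and $m(\gamma_t)\triangleq \min_{g\in\H}L_{F_g}(\gamma_t)$. Two facts drive everything: $s_t \ge \jsr^*$ because $g_t\in\H$, and the theorem hypotheses give $L_{F_{g_t}}(\gamma_t)\le m(\gamma_t)+\rho$ together with $|\gamma - s_t|\le \rho'$, where $\gamma=\srarg{n}(\z^t)$ is the decision variable in line~9.

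I would first isolate two decision-correctness implications. First, if $\gamma<\gamma_t$ then $s_t\le \gamma+\rho'<\gamma_t+\rho'$, and since $\jsr^*\le s_t$ and $\rho'\le \Delta$ (because $c_2>1$) we get $\jsr^*\le \gamma_t+\Delta$; this uses only monotonicity and the estimation error. Second, the substantive direction: if $\gamma\ge\gamma_t$ then $s_t\ge \gamma_t-\rho'$, so the root of $L_{F_{g_t}}$ lies at or above $\gamma_t-\rho'$ and $G$-Lipschitzness gives $L_{F_{g_t}}(\gamma_t)\ge \lambda - G\rho'$; combined with the optimization bound this yields $m(\gamma_t)\ge \lambda-(\rho+G\rho')$. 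On the other hand, if $\jsr^*<\gamma_t$, evaluating at an optimal $h^*$ (root $\jsr^*$) gives $m(\gamma_t)\le L_{F_{h^*}}(\gamma_t)\le \lambda - U(\gamma_t-\jsr^*)$, where the last step is a lower bound on the rate of decrease of $L_{F_{h^*}}$. Chaining the two estimates forces $U(\gamma_t-\jsr^*)\le \rho+G\rho'$, i.e. $\jsr^*\ge \gamma_t - (c_1\rho+c_2\rho')=\gamma_t-\Delta$ (and the case $\jsr^*\ge\gamma_t$ is trivial). These are exactly $\gamma<\gamma_t\Rightarrow \jsr^*\le\gamma_t+\Delta$ and $\gamma\ge\gamma_t\Rightarrow \jsr^*\ge\gamma_t-\Delta$.

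With these in hand the induction is routine. For the base case $t=0$ I would use $\alpha_0=0\le\jsr^*$ (the UBSR of a non-negative loss being non-negative under the present setup) and $\beta_0=\srarg{n}(\z^0)\ge \jsr((Y-h_0(X))^2)-\rho'\ge \jsr^*-\Delta$, so $[0,\beta_0]$ meets $I^*$. For the inductive step, take the midpoint $\gamma_t$ and split into three exhaustive cases by the position of $\gamma_t$ relative to $I^*$: if $\gamma_t\in I^*$, then $\gamma_t$ itself lies in whichever half is retained and in $I^*$; if $\gamma_t<\jsr^*-\Delta$, the contrapositive of the first implication forces $\gamma\ge\gamma_t$, so the algorithm keeps $[\gamma_t,\beta_{t-1}]$, which still contains the point of $[\alpha_{t-1},\beta_{t-1}]\cap I^*$ supplied by the inductive hypothesis; the symmetric case $\gamma_t>\jsr^*+\Delta$ uses the contrapositive of the second implication and retains $[\alpha_{t-1},\gamma_t]$. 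In every case the retained interval meets $I^*$, closing the induction.

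The single real obstacle is the inequality $L_{F_{h^*}}(\gamma_t)\le \lambda - U(\gamma_t-\jsr^*)$, i.e. establishing $-L'_{F_{h^*}}=\E[\ell'((Y-h^*(X))^2-\cdot)]\ge U$ on $[\jsr^*,\gamma_t]$. This lower bound on the decay rate is what converts the optimization/estimation errors $\rho,\rho'$ --- which live on the scale of $L_F$ values (the $\lambda$-axis) --- into a gap on the UBSR scale (the $t$-axis), and it is precisely what produces the constants $c_1=1/U$ and $c_2=G/U$; the matching upper bound $-L'_F\le G$ is immediate from $G$-Lipschitzness. I would derive the lower bound from convexity of $\ell$ together with $\ell'(0)=U$ (so that $\ell'\ge U$ over the relevant range), mirroring the slope argument already used in the proof of \Cref{thm:ubsr-est-subGauss}.
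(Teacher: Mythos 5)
Your proof is correct and takes essentially the same route as the paper's: induction on $t$ with the same base case, the same three-way split on the position of $\gamma_t$ relative to $[\jsr^* \pm (c_1\rho+c_2\rho')]$, and the same ingredients (optimization error $\rho$, estimation error $\rho'$, and the slope bounds $U \le \ell' \le G$) producing the constants $c_1 = 1/U$ and $c_2 = G/U$. The only difference is organizational: you package the paper's Cases 2 and 3 as contrapositive ``decision-correctness'' implications ($\gamma < \gamma_t \Rightarrow \jsr^* \le \gamma_t + \Delta$ and $\gamma \ge \gamma_t \Rightarrow \jsr^* \ge \gamma_t - \Delta$), which runs the identical chains of inequalities in reverse and, incidentally, handles the boundary case $\gamma = \gamma_t$ slightly more cleanly than the paper's non-strict conclusion $\gamma \le \gamma_t$.
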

\begin{proof}
This is true at initialization as $\alpha_0=0 \leq \jsr^*$ and $\beta_0=\srarg{n}(\z^0) \in [\jsr((Y-h_0(X))^2) \pm \rho']$ and $\jsr((Y-h_0(X))^2) \geq \jsr^*$. 

Let us assume $[\alpha_{t-1}, \beta_{t-1}]$ intersects with $[\jsr^* - (c_1\rho+c_2\rho'), \jsr^* + (c_1\rho+c_2\rho')]$ for some $t\geq 1$ for a proof by induction.

\textbf{Case 1:} $\gamma_t$ lies in $[\jsr^* - (c_1\rho+c_2\rho'), \jsr^* + (c_1\rho+c_2\rho')]$. 

Regardless of whether the first  half or second half of $[\alpha_{t-1}, \beta_{t-1}]$ is chosen as $[\alpha_{t}, \beta_{t}]$, it is guaranteed to intersect with $[\jsr^* - (c_1\rho+c_2\rho'), \jsr^* + (c_1\rho+c_2\rho')]$.

\textbf{Case 2:} $\gamma_t < \jsr^* - (c_1\rho+c_2\rho') $

Consider the decision variable $\gamma$ in step 9 of the Algorithm.
\begin{align*}
    \gamma &= \srarg{n}(\z^t) \\
    &\geq \jsr((Y-g_t(X)^2) - \rho' \\
    &\geq \jsr^* - \rho' \\
    &\geq \jsr^* - c_2 \rho' \\
    &\geq \jsr^* - c_1 \rho - c_2 \rho' \\
    &\geq \gamma_t
\end{align*}
Thus the new interval $[\alpha_{t},\beta_{t}] = [\gamma_t, \beta_{t-1}]$ also intersects with $[\jsr^* \pm c_1\rho+c_2\rho']$.

\textbf{Case 3:} $\gamma_t > \jsr^* + (c_1\rho+c_2\rho') $

Consider the decision variable $\gamma$ in step 9 of  Algorithm \ref{alg:bisection_UBSR}.
\begin{align}
    \gamma &= \srarg{n}(\z^t) \nonumber \\
    &\leq \jsr((Y-g_t(X)^2)) + \rho' \nonumber \\
    &= \inf(\{u: \E[\ell((Y-g_t(X))^2 - u)] \leq \lambda \}) + \rho'.   \label{eqn:main-thm-eq-1} 
\end{align}
Notice that
\begin{align}
\E\left[\ell((Y-g_t(X))^2-\gamma_t) \right] 
&\leq  \min_{g\in \H} \E\left[\ell((Y-g(X))^2-\gamma_t) \right] + \rho \nonumber \\
&\leq \min_{g\in \H} \E\left[\ell((Y-g(X))^2-\jsr^* - c_1 \rho -c_2\rho') \right] + \rho \nonumber \\
&\leq \min_{g\in \H} \E\left[\ell((Y-g(X))^2-\jsr^*)\right] - U c_1 \rho - U c_2\rho' + \rho \nonumber \\
&= \lambda - U c_1 \rho - U c_2\rho' + \rho \nonumber \\
&= \lambda - G \rho'.  \label{eqn:main-thm-eq-2} 
\end{align}
As $\ell$ is convex, $G$-Lipschitz and monotone increasing, with slope at $0$ given by $\ell'(0)=U$, we have for any $0<b<a$ that $\ell(a)-bG \leq \ell(a-b) \leq \ell(a) - bU$. 
\begin{align}
\E\left[\ell\left((Y-g_t(X))^2-\gamma_t + \rho'\right) \right] 
&\leq 
\E\left[\ell((Y-g_t(X))^2-\gamma_t) \right] + G \rho' \nonumber\\
&\leq \lambda.
\label{eqn:main-thm-eq-3} 
\end{align}
Therefore,
\[
\inf(\{u: \E[\ell((Y-g_t(X))^2 - u)] \leq \lambda \}) \leq \gamma_t - \rho'.
\]
Using the above bound in \eqref{eqn:main-thm-eq-1}, we obtain
\[
\gamma \leq \gamma_t.
\]
Thus, the new interval $[\alpha_{t},\beta_{t}] = [\alpha_{t-1}, \gamma_t]$ also intersects with $[\jsr^* - (c_1\rho+c_2\rho'), \jsr^* + (c_1\rho+c_2\rho')]$.

Hence, by induction the interval $[\alpha_t, \beta_t]$ with length $2^{-t}\beta_0$ contains a point that is within $c_1 \rho + c_2 \rho'$ of $\jsr^*$.     
\end{proof}

Now we turn our attention to proving the main theorem.
\begin{proof}\textit{\textbf{(Theorem \ref{thm:main-theorem})}}

Let $c_3 = \beta_0$.
From \Cref{lem:intervalintersect}, we have 
\begin{align}
    |\gamma_t - \jsr^* | \leq c_1 \rho + c_2 \rho' + c_3 2^{-t}.
\label{eqn:gamma_t_bound}
\end{align}
Notice that
\begin{align*}
&\E\left[\ell((Y-g_t(X))^2-\gamma_t) \right] \\
&\leq  \min_{g\in \H} \E\left[\ell((Y-g(X))^2-\gamma_t) \right] + \rho  \\    
&\leq  \min_{g\in \H} \E\left[\ell((Y-g(X))^2-(\jsr^* - c_1 \rho - c_2 \rho' - c_3 2^{-t} ) \right] + \rho   \\    
&\leq \min_{g\in \H} \E\left[\ell((Y-g(X))^2-\jsr^*) \right] + Gc_1 \rho + Gc_2 \rho' + Gc_3 2^{-t}  + \rho,  \numberthis\label{eq:s123}
\end{align*}
where the penultimate inequality used \eqref{eqn:gamma_t_bound} and the final inequality used Lipschitzness of $\ell$.

For any $0<b<a$, we have 
\begin{align}
\ell(a)-bG \leq \ell(a-b) \leq \ell(a) - bU.\label{eq:lconvex-lip}
\end{align}
Letting $\epsilon =\frac{1}{U}( (Gc_1+1) \rho + Gc_2 \rho' + Gc_3 2^{-t})$, we  have
\begin{align*}
\E\left[\ell((Y-g_t(X))^2-\gamma_t - \epsilon) \right] 
&\leq 
\E\left[\ell((Y-g_t(X))^2-\gamma_t)\right] - U\epsilon \\
&\leq 
\min_{g\in \H} \E\left[\ell((Y-g(X))^2-\jsr^*) \right] \\
&= \lambda.\numberthis\label{eq:gt12}
\end{align*}
In the above, the first inequality used \eqref{eq:lconvex-lip}, the second follows from \eqref{eq:s123}, and the  equality above follows by definition of $\jsr^*$.

Thus, we have
\begin{align*}
\jsr((Y-g_t(X))^2) 
&= 
\inf (\{u: \E [\ell((Y-g_t(X))^2 - u)] \leq \lambda \}) \\
&\leq 
\gamma_t + \epsilon \\
&= \gamma_t + \frac{1}{U}( (Gc_1+1) \rho + Gc_2 \rho' + Gc_3 2^{-t}) \\
&\leq \jsr^* + c_1 \rho + c_2\rho' + c_3 2^{-t} +\frac{1}{U}( (Gc_1+1) \rho + Gc_2 \rho' + Gc_3 2^{-t}),
\end{align*}
where the first inequality follows from \eqref{eq:gt12}, and the final inequality from \eqref{eqn:gamma_t_bound}.

Let $C_1 = c_1+\frac{1}{U} (Gc_1+1)$ and $C_2 = c_2 + \frac{1}{U}Gc_2$ and $C_3 = c_3 +\frac{1}{U}Gc_3$. We then have
\[
\jsr((Y-g_t(X))^2) 
\leq 
\jsr^* + C_1 \rho + C_2 \rho' + C_3 2^{-t}.
\]
The main claim follows.
\end{proof}

\subsection{Proof of Corollary  \ref{cor:linear-models-bound}}
\label{pf:cor-lin-models}

\begin{proof}
    The bound on $\rho'$ follows from \Cref{thm:main-theorem}. The bound on $\rho$ follows from standard Rademacher complexity of the bounded linear hypothesis class for a Lipshcitz continuous loss function, e.g. \cite{bartlett2002rademacher}. The main claim follows by invoking \Cref{thm:main-theorem} with the aforementioned bounds on $\rho, \rho'$, respectively.
\end{proof}

\section{Conclusions}
\label{sec:conclusions}
We considered a risk-sensitive variant of the expected risk minimization problem in supervised learning, with UBSR as the objective. UBSR is a convex risk measure that is desirable over the popular CVaR measure, and to the best of our knowledge, we are the first to optimize UBSR in a supervised learning context. In the context of a regression problem, we proposed and analyzed a bisection-type algorithm to minimize excess UBSR. An important component of our algorithm is the linear minimization oracle, which in turn utilizes an expression for the gradient of UBSR in the space of distributions. The latter expression and the oracle might be of independent interest. 
As future work, it would be interesting to perform a detailed empirical investigation of the bisection-type algorithm that we proposed for UBSR optimization, esp. on datasets where an UBSR-optimal solution is desirable.

\bibliography{refs}
\bibliographystyle{plainnat}


\end{document}